\documentclass[11pt]{article}
\usepackage[thmeqnumber]{matus}
\setlist[itemize]{leftmargin=2em}
\setlist[enumerate]{leftmargin=2em}

\def\mc{|\hspace{-0.21em}\rangle}
\def\sr{\textup{sr}}

\def\tX{\tilde X}
\def\tW{\tilde W}
\def\tcO{{\widetilde{\cO}}}
\def\phig{\phi_\gamma}
\def\Lip{\textup{Lip}}
\def\mnist{\texttt{mnist}\xspace}
\def\cifar{\texttt{cifar10}\xspace}

\def\skde{p_{n,\beta}}

\def\tb{\tilde{b}}

\tcbset{size=fbox,boxrule=0pt,bottomrule=0pt,sharp corners = all}

\title{\textbf{Generalization bounds via distillation}}

\author{
Daniel Hsu\thanks{\texttt{<\href{mailto:djhsu@cs.columbia.edu}{djhsu@cs.columbia.edu}>};
Columbia University, New York City.}
\and
Ziwei Ji\thanks{\texttt{<\{\href{mailto:ziweiji2@illinois.edu}{ziweiji2},\href{mailto:mjt@illinois.edu}{mjt},\href{mailto:lanwang2@illinois.edu}{lanwang2}\}@illinois.edu>};
University of Illinois, Urbana-Champaign.}
\and
Matus Telgarsky\footnotemark[2]
\and
Lan Wang\footnotemark[2]
}

\date{} 

\begin{document}

\maketitle

\begin{abstract}
  This paper theoretically investigates the following empirical phenomenon:
  given a high-complexity network with poor generalization bounds, one can \emph{distill}
  it into a network with nearly identical predictions but low complexity and vastly smaller 
  generalization bounds.
  The main contribution is an analysis showing that the original network
  inherits this good generalization bound from its distillation, assuming the use of well-behaved data augmentation.
  This bound is presented both in an abstract and in a concrete form, the latter complemented
  by a reduction technique to handle modern computation graphs featuring convolutional layers, fully-connected
  layers, and skip connections, to name a few.
  To round out the story, a (looser) classical uniform convergence analysis of compression is also presented,
  as well as a variety of experiments on \cifar and \mnist demonstrating
  similar generalization performance
  between the original network and its distillation.
\end{abstract}

\section{Overview and main results}

Generalization bounds are statistical tools which take as input various measurements
of a predictor on training data, and output a performance estimate for
unseen data --- that is, they estimate how well the predictor \emph{generalizes} to unseen data.
Despite extensive development spanning many decades \citep{anthony_bartlett_nn},
there is growing concern that these bounds are not only disastrously loose
\citep{roy_vacuous}, but worse that they do not correlate with the underlying
phenomena \citep{fantastic_measures},
and even that the basic method of proof is doomed
\citep{rethinking,nagarajan_kolter__uniform_convergence}.
As an explicit demonstration of the looseness of these bounds, \Cref{fig:bounds}
calculates bounds for a standard ResNet architecture
achieving test errors of respectively 0.008 and 0.067 on \mnist and \cifar;
the observed generalization gap is $10^{-1}$,
while standard generalization techniques upper bound it with $10^{15}$.

Contrary to this dilemma, there is evidence that these networks can often be
compressed or \emph{distilled} into simpler networks, while still preserving their output values
and low test error.  Meanwhile, these simpler networks exhibit vastly better generalization
bounds: again referring to \Cref{fig:bounds}, those same networks from before can be distilled 
with hardly any change to their outputs, while their bounds reduce by a factor of roughly $10^{10}$.
Distillation is widely studied \citep{caruana_distillation,hinton_distillation},
but usually the
original network is discarded and only the final distilled network is preserved.

The purpose of this work is to carry the good generalization bounds of the distilled network
back to the original network;
in a sense, the explicit simplicity of the distilled network is used
as a witness to implicit simplicity of the original network.
The main contributions are as follows.

\begin{itemize}
  \item
    The main theoretical contribution is a generalization bound for the original,
    undistilled network which scales primarily with the generalization properties of
    its distillation, assuming that well-behaved data augmentation is used to measure the
    \emph{distillation distance}.
    An abstract version of this bound is stated in \Cref{fact:main:fix:3}, along with a
    sufficient data augmentation technique in \Cref{fact:augmentation}.
    A concrete version of the bound, suitable to handle the ResNet architecture in
    \Cref{fig:bounds}, is described in \Cref{fact:pollard:snb}.  Handling sophisticated
    architectures with only minor proof alterations is another contribution of this work,
    and is described alongside \Cref{fact:pollard:snb}.  This abstract and concrete analysis is
    sketched in \Cref{sec:analysis}, with full proofs deferred to appendices.

  \item
    Rather than using an assumption on the distillation process
    (e.g., the aforementioned ``well-behaved data augmentation''),
    this work also gives a direct uniform convergence analysis, culminating
    in \Cref{fact:rad:sr}.
    This is presented partially as an open problem or cautionary tale, as its proof is vastly
    more sophisticated than that of \Cref{fact:pollard:snb}, but ultimately
    results in a much looser analysis.  This analysis is sketched in \Cref{sec:analysis},
    with full proofs deferred to appendices.

  \item
    While this work is primarily theoretical, it is motivated by \Cref{fig:bounds} and
    related experiments: \Cref{fig:width,fig:srb,fig:random} demonstrate that not only
    does distillation improve generalization upper bounds, but moreover it makes them sufficiently
    tight to capture intrinsic properties of the predictors, for example removing the usual
    bad dependence on width in these bounds (cf. \Cref{fig:width}).
    These experiments are detailed in \Cref{sec:exp}.

\end{itemize}

\begin{figure}[t]
  \centering
  \begin{subfigure}[b]{0.590\textwidth}
    \includegraphics[width = 1.0\textwidth]{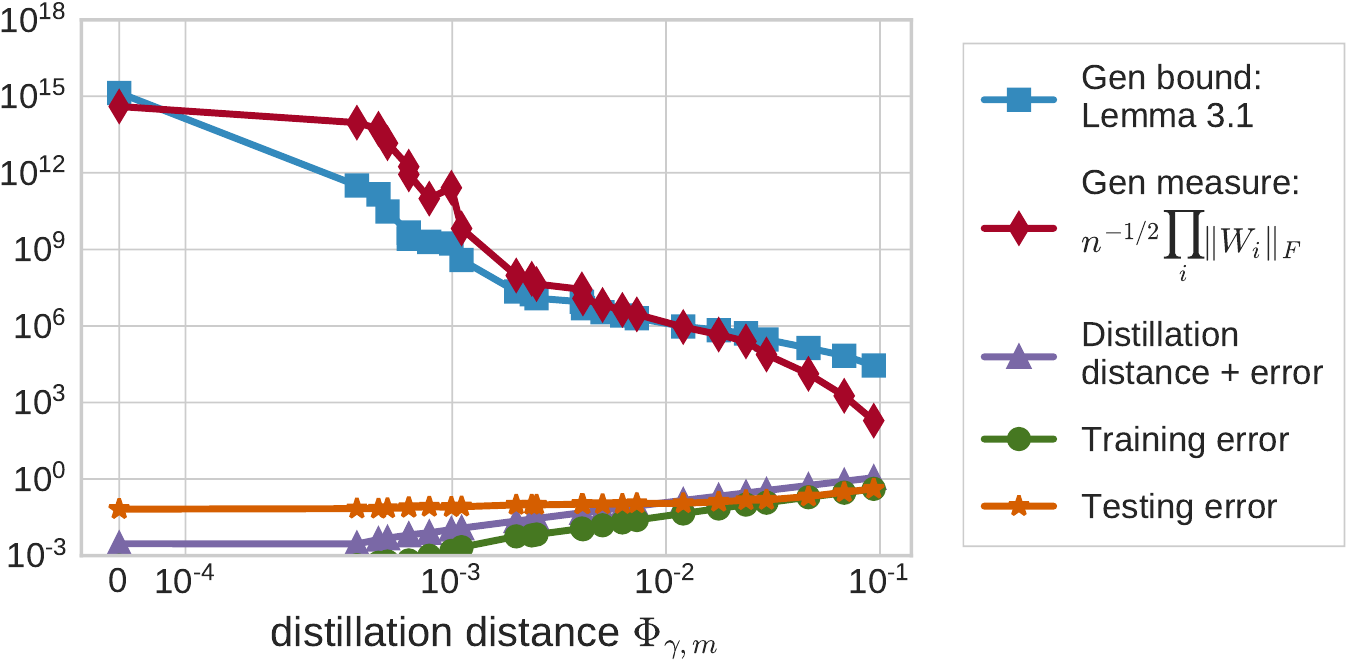}
    \caption{ResNet8 trained on \cifar.\hspace{6em}\phantom{lol}}
  \end{subfigure}
\begin{subfigure}[b]{0.401\textwidth}
    \includegraphics[width = 1.0\textwidth]{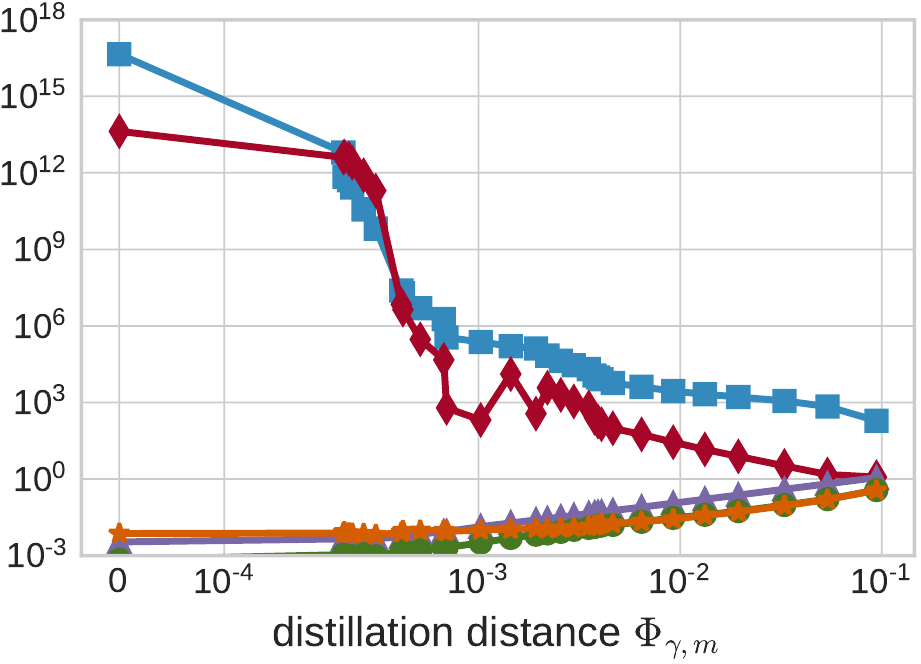}
    \caption{ResNet8 trained on \mnist.}
  \end{subfigure}
\caption{\textbf{Generalization bounds throughout distillation.}
    These two subfigures track a sequence of increasingly distilled/compressed ResNet8
    networks along
    their horizontal axes, respectively for \cifar and \mnist data.
    This horizontal axis measures \emph{distillation distance $\Phi_{\gamma,m}$},
    as defined below in \cref{eq:Phi}.
    The bottom curves measure various training and testing errors, whereas the top two
    curves measure respectively a generalization bound presented here (cf.
  \Cref{fact:pollard:snb} and \Cref{fact:snb}), and a \emph{generalization measure}.
Notably, the top two curves drop throughout a long interval
    during which test error
    remains small.
  For further experimental details, see \Cref{sec:exp}.}
  \label{fig:bounds}
\end{figure}

\subsection{An abstract bound via data augmentation}\label{sec:abstract}

This subsection describes the basic distillation setup
and the core abstract bound based on data augmentation,
culminating in \Cref{fact:main:fix:3,fact:augmentation};
a concrete bound follows in \Cref{sec:concrete}.

Given a multi-class predictor
$f:\R^d\to\R^k$, distillation finds another predictor $g:\R^d\to\R^k$ which is simpler,
but close in \emph{distillation distance $\Phi_{\gamma,m}$},
meaning the \emph{softmax outputs $\phig$}
are close on average over a set of points $(z_i)_{i=1}^m$:
\begin{equation}
\Phi_{\gamma,m}(f,g) := \frac 1 m \sum_{i=1}^m\enVert{ \phig(f(z_i)) - \phig(g(z_i))}_1,
\quad \textup{where }
\phig(f(z)) \propto \exp\del{f(z)/\gamma}.
\label{eq:Phi}
\end{equation}
The quantity $\gamma > 0$ is sometimes called a \emph{temperature} \citep{hinton_distillation}.
Decreasing $\gamma$ increases sensitivity
near the decision boundary; in this way, it is naturally related to the concept of \emph{margins}
in generalization theory, as detailed in \Cref{sec:margins}.
due to these connections, the use of softmax is beneficial in this work,
though not completely standard in the literature \citep{caruana_distillation}.

We can now outline \Cref{fig:bounds} and the associated
empirical phenomenon which motivates this work.  (Please see \Cref{sec:exp} for further
details on these experiments.)
Consider a predictor $f$ which has good test error but bad generalization bounds;
by treating the distillation distance $\Phi_{\gamma,m}(f,g)$ as an objective function
and increasingly regularizing $g$,
we obtain a sequence of predictors $(g_0,\ldots,g_t)$, where $g_0=f$, which trade off
between distillation distance and predictor complexity.
The curves in \Cref{fig:bounds} are produced in exactly this way, and
demonstrate that there are predictors nearly identical to the original $f$
which have vastly smaller generalization bounds.

Our goal here is to show that this is enough to imply that $f$ in turn must \emph{also} have
good generalization bounds, despite its apparent complexity.
To sketch the idea,
by a bit of algebra (cf. \Cref{fact:phig}),
we can upper bound error probabilities with \emph{expected} distillation distances and errors:
\[
  \Pr_{x,y}[\argmax_{y'}f(x)_{y'} \neq y]
\leq
2 \bbE_{x} \enVert{ \phig(f(x)) - \phig(g(x)) }_1
  + 2 \bbE_{x,y} \del{ 1 - \phig(g(x))_y}.
\]
The next step is to convert these expected errors into quantities over the training set.
The last term is already in a form we want: it depends only on $g$,
so we can apply uniform convergence with the low complexity of $g$.
(Measured over the training set, this term is the \emph{distillation error}
in \Cref{fig:bounds}.)

The expected distillation distance term is problematic, however.  Here are two approaches.
\begin{enumerate}
  \item
    We can directly apply uniform convergence; for instance, this approach was followed by \citet{suzuki2019compression},
    and a more direct approach is followed here to prove \Cref{fact:rad:sr}.
    Unfortunately, it is unclear how this technique can avoid paying
    significantly for the high complexity of $f$.

  \item
    The idea in this subsection is to somehow trade off computation for the high statistical cost
    of the complexity of $f$.  Specifically, notice that $\Phi_{\gamma,m}(f,g)$
    only relies upon the marginal distribution
    of the inputs $x$, and not their labels.  This subsection will pay computation to estimate
    $\Phi_{\gamma,m}$ with extra samples via \emph{data augmentation},
    offsetting the high complexity of $f$.
\end{enumerate}

We can now set up and state our main distillation bound.  Suppose we have a training set 
$((x_i,y_i))_{i=1}^n$ drawn from some measure $\mu$, with marginal distribution $\mu_\cX$ on
the inputs $x$.
Suppose we also have $(z_i)_{i=1}^m$ drawn from a \emph{data augmentation distribution}
$\nu_n$, the subscript referring to the fact that it depends on $(x_i)_{i=1}^n$.  Our
analysis works when $\|\nicefrac{\dif\mu_{\cX}}{\dif\nu_n}\|_\infty$, the ratio between the two densities,
is finite.  If it is large, then one can tighten the bound by sampling more from $\nu_n$,
which is a computational burden;
explicit bounds on this term will be given shortly in \Cref{fact:augmentation}.

\begin{lemma}
  \label{fact:main:fix:3}
  Let temperature parameter $\gamma > 0$ be given,
  along with sets of multiclass predictors $\cF$ and $\cG$.
  Then with probability at least $1-2\delta$ over an iid draw of data $((x_i,y_i))_{i=1}^n$ from
  $\mu$ and $(z_i)_{i=1}^m$ from $\nu_n$,
  every $f\in\cF$ and $g\in\cG$ satisfy
  \begin{align*}
    \Pr[\argmax_{y'} f(x)_{y'} \neq y]
    &\leq 2 
      \enVert{ \frac {\dif \mu_{\cX}}{\dif \nu_n} }_{\infty}
      \Phi_{\gamma,m}(f, g)
      +
      \frac 2 n \sum_{i=1}^n \del{1 - \phig(g(x_i))_{y_i}}
      \\
    &
      +
      \tcO\del[3]{
        \frac {k^{3/2}}{\gamma}
      \enVert{ \frac {\dif \mu_{\cX}}{\dif \nu_n} }_{\infty}
      \del{ \Rad_m(\cF) + \Rad_m(\cG)}
    + \frac {\sqrt k}{\gamma}\Rad_n(\cG) }
      \\
      &
      +
    6 \sqrt{\frac{\ln(1/\delta)}{2n}}
      \del{ 1 + 
          \enVert{ \frac {\dif \mu_{\cX}}{\dif \nu_n} }_{\infty}
          \sqrt{\frac n m }
      },
  \end{align*}
  where Rademacher complexities $\Rad_n$ and $\Rad_m$ are defined in \Cref{sec:notation}.
\end{lemma}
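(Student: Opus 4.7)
The starting point is the inequality from \Cref{fact:phig},
\[
  \Pr[\argmax_{y'} f(x)_{y'} \neq y]
  \leq
  2\bbE_{x}\enVert{\phig(f(x)) - \phig(g(x))}_1
  + 2\bbE_{x,y}\del{1 - \phig(g(x))_y},
\]
and each of the two expectations on the right-hand side must be converted into a sample average, but by genuinely different routes. I would handle the second (cross-entropy-like) term by plain uniform convergence with respect to the training sample $((x_i,y_i))_{i=1}^n$, and the first (distillation) term by a change of measure followed by uniform convergence on the augmented sample $(z_i)_{i=1}^m$. A union bound at the end combines the two high-probability statements and yields the promised $1-2\delta$ conclusion.

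\paragraph{The second term.} The function class $\{(x,y)\mapsto 1-\phig(g(x))_y : g\in\cG\}$ is bounded in $[0,1]$ and, as $g(x)$ varies in $\R^k$, is Lipschitz with constant on the order of $1/\gamma$ in any fixed coordinate of softmax. A standard vector-contraction Rademacher bound (going from $\cG$ to the composition with softmax and coordinate projection) produces the $\tcO(\sqrt{k}/\gamma \cdot \Rad_n(\cG))$ term, and McDiarmid/Hoeffding gives the $O(\sqrt{\ln(1/\delta)/n})$ concentration piece. This part is routine.

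\paragraph{The first term.} Here I would first perform a change of measure from $\mu_{\cX}$ to $\nu_n$,
\[
  \bbE_{x\sim\mu_\cX}\enVert{\phig(f(x))-\phig(g(x))}_1
  \leq
  \enVert{\tfrac{\dif\mu_\cX}{\dif\nu_n}}_\infty
  \bbE_{z\sim\nu_n}\enVert{\phig(f(z))-\phig(g(z))}_1,
\]
and then, \emph{conditionally on $(x_i)_{i=1}^n$} (so that $\nu_n$ is a fixed distribution), apply uniform convergence over the class $\{z\mapsto \enVert{\phig(f(z))-\phig(g(z))}_1 : f\in\cF,\, g\in\cG\}$ on the $m$ iid draws $(z_i)_{i=1}^m$ from $\nu_n$. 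The $\enVert{\cdot}_1$ norm over $k$ coordinates together with a vector-contraction Rademacher bound for softmax gives the $\tcO(k^{3/2}/\gamma\cdot(\Rad_m(\cF)+\Rad_m(\cG)))$ term, each of $\cF$ and $\cG$ appearing because the class is a difference. McDiarmid gives a concentration term of order $\enVert{\dif\mu_\cX/\dif\nu_n}_\infty \cdot \sqrt{\ln(1/\delta)/m}$, which after rewriting $1/\sqrt{m}=\sqrt{n/m}/\sqrt{n}$ produces exactly the $\enVert{\dif\mu_\cX/\dif\nu_n}_\infty \sqrt{n/m}$ contribution in the last line of the bound.

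\paragraph{Main obstacle.} The delicate point is that $\nu_n$ depends on the training sample, so the overall draw $((x_i)_i,(z_i)_i)$ does not split into two independent iid pieces. The clean way around this is to condition on $(x_i)_{i=1}^n$ before invoking any concentration or Rademacher bound on the $z_i$'s, and then integrate; but this requires that $\Rad_m(\cF)$ and $\Rad_m(\cG)$ be interpreted in a way that is either distribution-free or uniform over the possible values of $\nu_n$ (matching the convention declared in \Cref{sec:notation}). A secondary but easy-to-misplace bookkeeping point is tracking the correct powers of $k$: a $\sqrt{k}$ in the single-coordinate term for $g$, versus a full $k^{3/2}$ in the $\enVert{\cdot}_1$-of-softmax-difference term, which is what separates the two Rademacher contributions in the statement.
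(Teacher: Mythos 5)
Your proposal is correct and follows essentially the same route as the paper: the paper merely factors the argument through an abstract lemma for bounded $[0,1]^k$-valued classes with general conjugate exponents (\Cref{fact:main:fix}, instantiated at $p=1$), plus the softmax properties of \Cref{fact:phig}, but the decomposition, the change of measure via the density ratio, the two uniform-convergence steps, the vector-contraction accounting of $\sqrt{k}$ versus $k^{3/2}$, and the final union bound are all as you describe. Your observation about conditioning on $(x_i)_{i=1}^n$ before concentrating over the $z_i$'s is a valid and worthwhile point that the paper glosses over.
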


A key point is that the Rademacher complexity $\Rad_m(\cF)$ of the complicated
functions $\cF$ has a subscript ``$m$'', which explicitly introduces a factor $\nicefrac 1 m$
in the complexity definition (cf. \Cref{sec:notation}).  As such, sampling more from the 
data augmentation measure can mitigate this term, and leave the complexity of the distillation
class $\cG$ as the dominant term.

Of course, this also requires $\|\nicefrac{\dif\mu_{\cX}}{\dif\nu_n}\|_\infty$ to be reasonable.
As follows is one data augmentation scheme (and assumption on marginal distribution $\mu_\cX$)
which ensures this.

\begin{lemma}
  \label{fact:augmentation}
  Let $(x_i)_{i=1}^n$ be a data sample drawn iid from $\mu_\cX$, and suppose
  the corresponding density $p$ is supported on $[0,1]^d$ and is H{\"o}lder continuous,
  meaning $|p(x) - p(x')| \leq C_\alpha \|x-x'\|^\alpha$ for some $C_\alpha\geq 0,\alpha\in [0,1]$.
  Define a data augmentation measure $\nu_n$ via the following sampling procedure.
  \begin{itemize}
    \item
      With probability $1/2$, sample $z$ uniformly within $[0,1]^d$.
    \item
      Otherwise, select a data index $i\in[n]$ uniformly,
      and sample $z$ from a Gaussian centered at $x_i$, and having covariance
      $\sigma^2 I$ where $\sigma := n^{-1 / (2\alpha + d)}$.
  \end{itemize}
  Then with probability at least $1-1/n$ over the draw of $(x_i)_{i=1}^n$,
  \[
    \enVert{\frac{\dif\mu_{\cX}}{\dif\nu_n}}_\infty
    =
    4+
    \cO\del{
      \frac {\sqrt{\ln n}}{n^{\alpha / (2\alpha + d)}}
    }.
  \]
\end{lemma}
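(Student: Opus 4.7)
The plan is to compute the Lebesgue density of $\nu_n$ explicitly and then upper-bound the Radon--Nikodym derivative pointwise by analyzing a kernel density estimator (KDE). By construction $\nu_n$ has density
\[
  q_n(z) \;=\; \tfrac{1}{2}\,\mathbf{1}_{[0,1]^d}(z) \;+\; \tfrac{1}{2n}\sum_{i=1}^{n}\phi_\sigma(z-x_i),
\]
where $\phi_\sigma$ is the density of $N(0,\sigma^2 I)$. Off $[0,1]^d$ the numerator $p$ vanishes and the ratio is zero, so the task reduces to lower-bounding $q_n(z)$ on $[0,1]^d$; writing $\hat p_\sigma(z) := n^{-1}\sum_i \phi_\sigma(z-x_i)$, it suffices to show that this KDE is not much smaller than $p(z)$ in the uniform norm.

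The first step is a uniform concentration bound for $\hat p_\sigma$ around its expectation $(p*\phi_\sigma)(z)$. Each summand is nonnegative, bounded by $(2\pi\sigma^2)^{-d/2}$, and has second moment at most $\|\phi_\sigma\|_\infty (p*\phi_\sigma)(z) = \cO(\sigma^{-d})$ (using that $p$ is bounded, a consequence of Hölder continuity combined with $\int p = 1$). Bernstein's inequality then yields a pointwise deviation of order $\cO(\sqrt{\ln(1/\delta)/(n\sigma^d)})$; extending to a uniform deviation on $[0,1]^d$ uses a $\sigma^{d+1}$-net together with the $\cO(\sigma^{-d-1})$-Lipschitz property of $\phi_\sigma$, so a union bound at $\delta = 1/n^2$ gives, with probability at least $1 - 1/n$,
\[
  \sup_{z}\,\bigl|\hat p_\sigma(z) - (p*\phi_\sigma)(z)\bigr| \;=\; \cO\!\del{\sqrt{\tfrac{\ln n}{n\sigma^d}}} \;=\; \cO\!\del{\tfrac{\sqrt{\ln n}}{n^{\alpha/(2\alpha+d)}}}
\]
after plugging in $\sigma = n^{-1/(2\alpha+d)}$.

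Next I would bound the bias $(p*\phi_\sigma)(z) - p(z) = \int (p(z-u)-p(z))\phi_\sigma(u)\,\dif u$, where $p$ is extended by zero off $[0,1]^d$. Hölder continuity bounds the integrand by $C_\alpha\|u\|^\alpha$ on the portion where $z-u\in[0,1]^d$, contributing an $\cO(\sigma^\alpha)$ term. The remainder is $-p(z)\Pr[z+\sigma G\notin [0,1]^d]$; here Hölder continuity on $\R^d$ combined with $p\equiv 0$ off $[0,1]^d$ forces $p(z)$ to decay like $C_\alpha\,d(z,\partial[0,1]^d)^\alpha$ near the boundary, which exactly counterbalances the Gaussian mass that escapes and still yields $(p*\phi_\sigma)(z) \geq p(z) - \cO(\sigma^\alpha)$. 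Combining with the concentration bound produces $\hat p_\sigma(z) \geq p(z) - \eps_n$ uniformly, with $\eps_n = \cO(\sqrt{\ln n}/n^{\alpha/(2\alpha+d)})$, hence
\[
  \frac{p(z)}{q_n(z)} \;\leq\; \frac{2 p(z)}{1 + p(z) - \eps_n} \;\leq\; 4 + \cO\!\del{\tfrac{\sqrt{\ln n}}{n^{\alpha/(2\alpha+d)}}},
\]
absorbing lower-order slack into the stated constant.

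The main technical obstacle is the uniform KDE deviation: because $\|\phi_\sigma\|_\infty = \cO(\sigma^{-d})$ is large, a naive Hoeffding bound would produce a uselessly large deviation of order $\sigma^{-d}/\sqrt n$, and the Bernstein-style variance-sensitive analysis is essential in order to recover the rate $1/\sqrt{n\sigma^d}$. The second delicate ingredient is the boundary handling in the bias step: a careless argument would lose a factor like $2^d$ at corners of $[0,1]^d$, and the proof must exploit the cancellation between the decay of $p$ near $\partial[0,1]^d$ and the missing Gaussian mass there to preserve the dimension-free leading constant.
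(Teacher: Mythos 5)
Your proposal is correct and reaches the stated bound, but by a genuinely more self-contained route than the paper's. The paper treats the Gaussian half of $\nu_n$ as a kernel density estimate and invokes a black-box uniform-consistency theorem (\Cref{fact:kde}, due to Jiang) to obtain $\sup_x |p(x)-p_n(x)| \leq \eps_n = \cO(\sqrt{\ln n}/n^{\alpha/(2\alpha+d)})$, and then finishes with a three-case analysis ($x$ outside the cube; $p(x)\geq 2\eps_n$; $p(x)<2\eps_n$), each case giving $4$ or $4\eps_n$. You instead re-derive the KDE guarantee from scratch (Bernstein plus a net for the stochastic part, a separate H{\"o}lder bias bound), and then finish with the single inequality $p/q_n \leq 2p/(1+p-\eps_n)$. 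That last step is actually cleaner and tighter than the paper's: since $2t/(1+t-\eps_n)\leq 2$ for all $t\geq 0$ once $\eps_n\leq 1$, your argument yields the leading constant $2$ rather than $4$. What the paper's route buys is brevity and outsourcing the empirical-process work; what yours buys is self-containedness and a sharper constant.

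Two minor repairs. First, your $\sigma^{d+1}$-net only controls the discretization error to $\cO(\sigma^{-d-1})\cdot\sigma^{d+1}=\cO(1)$, which does not sit below $\eps_n$; take the spacing polynomially smaller in $n$ (e.g.\ $\sigma^{d+1}\eps_n$) --- the log-cardinality of the net is still $\cO(\ln n)$, so the union bound and the final rate are unchanged. Second, under the natural reading that the H{\"o}lder condition holds on all of $\R^d$ with $p\equiv 0$ off $[0,1]^d$ (which is what the cited KDE theorem needs as well), the bias bound is the one-line estimate $|p(z-u)-p(z)|\leq C_\alpha\|u\|^\alpha$ for \emph{every} $u$, and the boundary bookkeeping you describe (decay of $p$ near $\partial[0,1]^d$ cancelling the escaping Gaussian mass) is correct but unnecessary.
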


Though the idea is not pursued here, there are other ways to control
$\|\nicefrac{\dif\mu_{\cX}}{\dif\nu_n}\|_\infty$, for instance via an independent
sample of unlabeled data;
\Cref{fact:main:fix:3} is agnostic to these choices.

\subsection{A concrete bound for computation graphs}
\label{sec:concrete}

This subsection gives an explicit complexity bound which starts from \Cref{fact:main:fix:3},
but bounds $\|\nicefrac{\dif\mu_{\cX}}{\dif\nu_n}\|_\infty$ via \Cref{fact:augmentation},
and also includes an upper bound on Rademacher complexity which can handle the ResNet, as in
\Cref{fig:bounds}.   A side contribution of this work is the formalism to easily handle
these architectures, detailed as follows.

\emph{Canonical computation graphs} are a way to write down feedforward networks
which include dense linear layers, convolutional
layers, skip connections, and multivariate gates, to name a few, all while allowing the analysis
to look roughly like a regular dense network.
The construction applies directly to batches:
given an input batch $X\in\R^{n\times d}$, the output $X_i$ of layer $i$ is defined
inductively as 
\[
  X_0^\T := X^\T,
  \qquad
  X_i^\T := \sigma_i\del{[ W_i \Pi_i D_i \mc F_i ] X_{i-1}^\T}
  =
  \sigma_i\del{\sbr{
  \begin{smallmatrix} W_i \Pi_i D_i X_{i-1}^\T \\ F_i  X_{i-1}^\T
  \end{smallmatrix}}},
\]
where:
$\sigma_i$ is a multivariate-to-multivariate $\rho_i$-Lipschitz function (measured over
minibatches on either side with Frobenius norm);
$F_i$ is a \emph{fixed} matrix, for instance
an identity mapping as in a residual network's skip connection;
$D_i$ is a \emph{fixed} diagonal matrix selecting certain coordinates, for instance the non-skip
part in a residual network; $\Pi_i$ is a Frobenius norm projection of a full minibatch;
$W_i$ is a weight matrix, the trainable parameters;
$[ W_i \Pi_i D_i \mc F_i ]$ denotes \emph{row-wise concatenation}
of $W_i\Pi_i D_i$ and $F_i$.

As a simple example of this architecture, a multi-layer skip connection can be modeled
by including identity mappings in all relevant fixed matrices $F_i$, and also including identity
mappings in the corresponding coordinates of the multivariate gates $\sigma_i$.
As a second example, note how to model convolution layers:
each layer outputs a matrix whose rows correspond to examples, but nothing prevents the batch
size from changes between layers;
in particular, the multivariate activation before a convolution layer can reshape its output
to have each row correspond to a patch of an input image, whereby the convolution filter is
now a regular dense weight matrix.

A fixed computation graph architecture $\cG(\vec \rho,\vec b,\vec r, \vec s)$
has associated hyperparameters
$(\vec \rho, \vec b, \vec r, \vec s)$, described as follows.
$\vec\rho$ is the set of Lipschitz constants for each (multivariate) gate, as described before.
$r_i$ is a norm bound $\|W_i^\T\|_{2,1}\leq r_i$ (sum of the $\|\cdot\|_2$-norms of the rows),
$b_i\sqrt{n}$ (where $n$ is the input batch size)
is the radius of the Frobenius norm ball which $\Pi_i$ is projecting onto,
and $s_i$ is the operator norm of $X \mapsto [ W_i \Pi_i D_i X^\T \mc F_i X^\T ]$.
While the definition is intricate, it cannot only model basic residual networks,
but it is sensitive enough to be able to have $s_i=1$ and $r_i=0$ when residual blocks are fully
zeroed out, an effect which indeed occurs during distillation.

\begin{theorem}
  \label{fact:pollard:snb}
  Let temperature parameter $\gamma > 0$ be given,
  along with multiclass predictors $\cF$, and a computation graph architecture
  $\cG$.
  Then with probability at least $1-2\delta$ over an iid draw of data $((x_i,y_i))_{i=1}^n$ from
  $\mu$ and $(z_i)_{i=1}^n$ from $\nu_n$,
  every $f\in\cF$ satisfies
  \begin{align*}
    \Pr[\argmax_{y'} f(x)_{y'} \neq y]
    &\leq
    \inf_{\substack{(\vec b, \vec r, \vec s)\geq 1\\g\in\cG(\vec \rho,\vec b, \vec r, \vec s)}}
    2 
    \Bigg[
      \enVert{ \frac {\dif \mu_{\cX}}{\dif \nu_n} }_{\infty}
      \Phi_{\gamma,m}(f, g)
      +
      \frac 2 n \sum_{i=1}^n (1 - \phig(g(x_i))_{y_i}
      \\
    &
      +
      \tcO\del[3]{
        \frac {k^{3/2}}{\gamma}
        \enVert{ \frac {\dif \mu_{\cX}}{\dif \nu_n} }_{\infty}
      \Rad_m(\cF) }
      +
      6 \sqrt{\frac{\ln(1/\delta)}{2n}}
      \del{ 1 + 
          \enVert{ \frac {\dif \mu_{\cX}}{\dif \nu_n} }_{\infty}
          \sqrt{\frac n m }
      }
      \\
    &\hspace{-1em}+
      \tcO\del[4]{
        \frac {\sqrt{k}}{\gamma \sqrt n}
        \del{1 + k\enVert{ \frac {\dif \mu_{\cX}}{\dif \nu_n} }_{\infty} \sqrt{\frac{n}{m}}}
      \del[4]{ \sum_i \sbr[3]{r_i b_i\rho_i \prod_{l=i+1}^L s_l\rho_l}^{2/3} }^{3/2}
      }
    \Bigg].
  \end{align*}
  Under the conditions of \Cref{fact:augmentation},
  ignoring an additional failure probability $\nicefrac 1 n$,
  then
  $\|\frac{\dif\mu_{\cX}}{\dif\nu_n}\|_\infty = 4 + \cO\del{
      \frac {\sqrt{\ln n}}{n^{\alpha / (2\alpha + d)}}
    }$.
\end{theorem}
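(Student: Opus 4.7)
The plan is to plug \Cref{fact:main:fix:3} directly into the statement, invoke \Cref{fact:augmentation} to control $\|\nicefrac{\dif\mu_\cX}{\dif\nu_n}\|_\infty$, and supply the two missing ingredients: a spectrally-normalized Rademacher bound for the canonical computation graph class $\cG(\vec\rho,\vec b, \vec r, \vec s)$ at fixed hyperparameters, and a union-bound argument that upgrades this per-architecture bound into the infimum over $(\vec b, \vec r, \vec s)\geq 1$ and $g$ appearing in the statement. With these two pieces in hand, the $\Rad_m(\cG)$ and $\Rad_n(\cG)$ terms from \Cref{fact:main:fix:3} combine---the former contributing $1/\sqrt{m}$ and the latter $1/\sqrt{n}$---to produce the $(1+k\|\nicefrac{\dif\mu_\cX}{\dif\nu_n}\|_\infty\sqrt{n/m})$ prefactor in the theorem.

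For the Rademacher bound at a fixed tuple $(\vec\rho,\vec b,\vec r,\vec s)$, I would prove
\[
\Rad_n(\cG(\vec\rho,\vec b,\vec r,\vec s)) \leq \tcO\del[3]{\frac{\sqrt{k}}{\sqrt{n}}\del[3]{\sum_{i=1}^L\sbr[1]{r_i b_i \rho_i \prod_{l=i+1}^L s_l\rho_l}^{2/3}}^{3/2}}
\]
via a Dudley entropy integral over layerwise covers, in the style of the Bartlett--Foster--Telgarsky spectrally-normalized bound. Concretely, the Frobenius projection $\Pi_i$ guarantees $\|\Pi_i D_i X_{i-1}^\T\|_F\leq b_i\sqrt{n}$, so an $\|\cdot\|_{2,1}$-cover of $W_i$ at resolution $\epsilon_i$ induces a Frobenius-resolution $\epsilon_i b_i\sqrt{n}$ cover of the pre-activation $[W_i\Pi_i D_i \mc F_i] X_{i-1}^\T$; the fixed block $F_i X_{i-1}^\T$ contributes zero error to the cover; and the operator-norm bound $s_i$ together with the Lipschitz constants $\rho_i$ propagates the perturbation to the output by the factor $\prod_{l=i+1}^L s_l\rho_l$. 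Balancing the $\epsilon_i$ across layers via the standard $2/3$-power trick yields the displayed complexity; replacing $n$ by $m$ bounds $\Rad_m(\cG)$ analogously.

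To convert the per-architecture bound into the infimum over $(\vec b,\vec r,\vec s)\geq 1$ that appears in the theorem, I would take a union bound over a dyadic grid: discretize each coordinate at powers of $2$ up to a $\textup{poly}(n)$ cutoff and round the ``true'' hyperparameters upward, absorbing the resulting $\tcO(\log n)$ penalty into the $\tcO$. Substituting the density ratio from \Cref{fact:augmentation} (picking up the advertised extra failure probability $1/n$) and rearranging then produces the stated bound. The main obstacle, in my estimation, is the layerwise covering underlying the Rademacher bound: tracking how the fixed matrix $F_i$, the selector $D_i$, the Frobenius projection $\Pi_i$, and the multivariate gate $\sigma_i$ interact so that the row-wise concatenation and skip structure contribute no extra multiplicative penalty beyond the $(r_i,b_i,s_i,\rho_i)$ quantities is the most delicate step, and is precisely what the canonical computation graph formalism is designed to enable. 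The union bound over hyperparameters and the direct substitution from \Cref{fact:augmentation} are routine by comparison.
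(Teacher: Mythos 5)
Your overall route is the same as the paper's: \Cref{fact:main:fix:3} plus \Cref{fact:augmentation}, a layerwise-cover/Dudley Rademacher bound for the computation graph class at fixed hyperparameters (this is exactly \Cref{fact:snb}, proved via the $(2,1)$-norm Maurey cover of \Cref{fact:21}, the projection radius $b_i\sqrt{n}$, zero cover error from the fixed blocks $F_i$, error propagation by $\prod_{l>i}s_l\rho_l$, and the $2/3$-power Lagrangian balancing), followed by a union bound over discretized hyperparameters; your accounting of how $\Rad_m(\cG)\propto 1/\sqrt m$ and $\Rad_n(\cG)\propto 1/\sqrt n$ combine into the $(1+k\|\nicefrac{\dif\mu_\cX}{\dif\nu_n}\|_\infty\sqrt{n/m})$ prefactor also matches.

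The one step that would fail as written is the dyadic grid for the hyperparameters. Rounding each per-layer operator norm $s_l$ up to the next power of $2$ inflates the product $\prod_{l=i+1}^L s_l$ by as much as $2^{L-i}$, an exponential-in-depth loss that is not absorbable into $\tcO$. The paper avoids this by stratifying $s_l$ on an \emph{additive} grid of resolution $1/L$ (so $\prod_l(s_l+1/L)\leq e\prod_l s_l$ when $s_l\geq 1$), while $b_i$ and $r_i$, which each appear only once rather than in a product over layers, get unit-width shells. A multiplicative $(1+1/L)$ grid for $s_l$ would also work. Relatedly, your $\mathrm{poly}(n)$ cutoff imposes an upper bound on the hyperparameters that the theorem does not assume; the paper instead takes a weighted countable union with prior weights $q_j=(j(j+1))^{-1}$ per shell, and absorbs the resulting $\ln(1/q(\vec b,\vec r,\vec s))$ penalty into the main complexity term via $\ln a\leq \ln(\gamma^2)+(a-\gamma^2)/\gamma^2$, so no cutoff is needed. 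With those two repairs your argument goes through and coincides with the paper's proof.
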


A proof sketch of this bound appears in \Cref{sec:analysis}, with full details deferred to
appendices.  The proof is a simplification of the covering number argument
from \citep{spec};
for another computation graph formalism designed to work with the covering number arguments
from \citep{spec}, see the generalization bounds due to \citet{wei2019datadependent}.

\begin{figure}[t]
  \centering
  \begin{subfigure}[b]{0.465\textwidth}
\includegraphics[width = 1.0\textwidth]{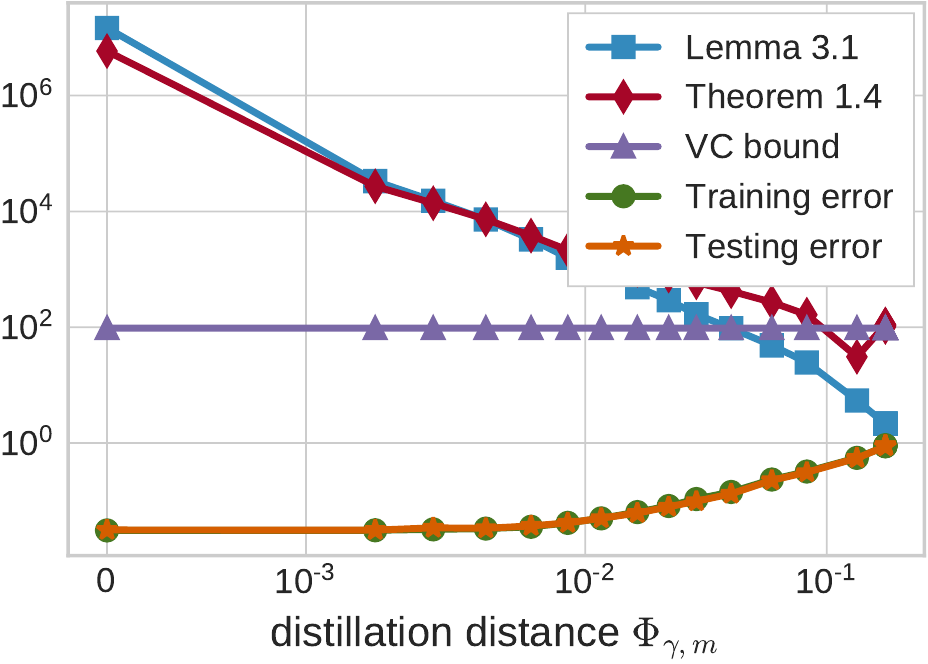}
    \caption{Comparison of bounds on \cifar.}
    \label{fig:srb:a}
  \end{subfigure}
  ~\quad~
  \begin{subfigure}[b]{0.465\textwidth}
    \includegraphics[width = 1.0\textwidth]{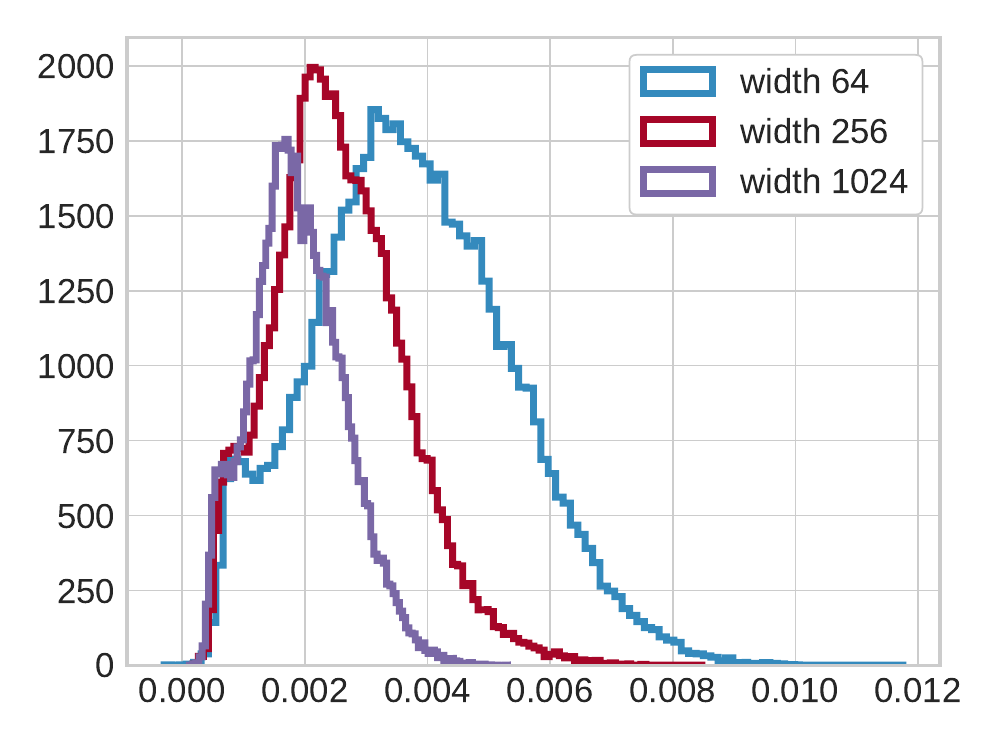}
    \caption{Width dependence with \Cref{fact:rad:sr}.}
    \label{fig:srb:b}
  \end{subfigure}
\caption{\textbf{Performance of stable rank bound (cf. \Cref{fact:rad:sr}).}
    \Cref{fig:srb:a} compares \Cref{fact:rad:sr} to \Cref{fact:snb} and the VC bound
    \citep{relu_vc}, and \Cref{fig:srb:b} normalizes the \emph{margin histogram}
      by \Cref{fact:rad:sr}, showing an unfortunate
    failure of width independence (cf.
  \Cref{fig:width}).  For details and a discussion of margin histograms, see \Cref{sec:exp}.}
  \label{fig:srb}
\end{figure}

\subsection{A uniform-convergence approach to distillation}\label{sec:srb}

In this section, we derive a Rademacher complexity bound on $\cF$ whose proof \emph{internally}
uses compression; specifically, it first replaces $f$ with a narrower network $g$, and then
uses a covering number bound sensitive to network size to control $g$.
The proof analytically chooses $g$'s width based on the
structure of $f$ and also the provided data, and this data dependence incurs a factor which 
causes the familiar $\nicefrac1 {\sqrt n}$ rate to worsen to $\nicefrac 1 {n^{1/4}}$
(which appears as $\nicefrac {\|X\|_{\tF}}{n^{3/4}}$).
This proof is much more intricate than the proofs coming before, and cannot handle general
computation graphs, and also ignores the beneficial structure of the softmax.

\begin{theorem}
  \label{fact:rad:sr}
  Let data matrix $X\in\R^{n\times d}$ be given,
  and
  let $\cF$ denote networks of the form $x \mapsto \sigma_L(W_L\cdots \sigma_1(W_1 x))$
  with spectral norm $\|W_i\|_2 \leq s_i$, and $1$-Lipschitz and $1$-homogeneous
  activations $\sigma_i$,
  and $\|W_i\|_\tF \leq R_i$
  and width at most $m$.  Then
  \[
    \Rad(\cF) = \tcO\del[3]{
      \frac {\|X\|_\tF} {n^{3/4}} 
      \sbr[2]{\prod_j s_j}\sbr[2]{\sum_i (R_i/s_i)^{4/5}}^{5/4}
      \sbr[2]{\sum_i \ln R_i}^{1/4}
    }.
  \]
\end{theorem}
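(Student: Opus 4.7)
The plan is a two-stage ``compress then cover'' argument, as advertised in the paragraph preceding the theorem. For each $f\in\cF$, first replace $f$ by a narrower surrogate $g$ with effective per-layer widths $k_i\ll m$; then cover the class of surrogates via a spectrally-normalized covering number bound whose entropy depends on $\vec k$ rather than on the ambient width $m$; finally optimize $\vec k$ to balance the compression error against the cover entropy. The sub-$\sqrt n$ rate $n^{-3/4}$ and the $4/5$/$5/4$ exponents will emerge from this optimization.

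\smallskip
\emph{Step 1 (compression).} For each layer $i$, apply Maurey-style sampling to $W_i$: write $W_i$ as a signed convex combination of unit-Frobenius rank-one atoms with total mass controlled by $R_i=\|W_i\|_\tF$, draw $k_i$ iid atoms, and let $\tilde W_i$ be their scaled empirical average. A standard second-moment calculation (sharpened by matrix Bernstein for high-probability control) gives $\|\tilde W_i-W_i\|_\tF\lesssim R_i/\sqrt{k_i}$, and propagation through the $s_j$-spectrally-bounded, $1$-Lipschitz subsequent layers yields an output-level Frobenius error on $X$ of at most $\sum_i(R_i/\sqrt{k_i})\|X\|_\tF\prod_{j\neq i}s_j$, contributing to $\Rad(\cF)$ a compression term of order $a\sum_i(R_i/s_i)k_i^{-1/2}$ with $a:=\|X\|_\tF\prod_j s_j/\sqrt n$.

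\smallskip
\emph{Step 2 (cover + Dudley).} Let $\cG_{\vec k}$ be the class of networks whose $i$-th layer has rank at most $k_i$; each such $\tilde W_i$ retains spectral norm $\le s_i$ and now has Frobenius norm $\le s_i\sqrt{k_i}$. A minor extension of the spectral covering argument of \citet{spec}, with the ambient width replaced by $k_i$ throughout, yields
\[
\log N(\eps,\cG_{\vec k}|_X,d_2)\;\lesssim\;\frac{\|X\|_\tF^2}{\eps^2}\prod_j s_j^2\left(\sum_i k_i^{1/3}\right)^3\cdot(\textup{polylog}),
\]
where the polylog absorbs discretization of atom directions and singular values. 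Dudley's integral then gives a cover contribution to $\Rad(\cG_{\vec k})$ of order $b\cdot(\sum_i k_i^{1/3})^{3/2}$ with $b:=\|X\|_\tF\prod_j s_j/n$, and the polylog ultimately produces the $(\sum_i\ln R_i)^{1/4}$ factor in the final bound.

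\smallskip
\emph{Step 3 (optimize).} Minimize the combined functional $a\sum_i (R_i/s_i)k_i^{-1/2}+b\cdot(\sum_i k_i^{1/3})^{3/2}$ over $k_i\ge 1$. A Lagrangian calculation gives an interior optimum at $k_i\propto(R_i/s_i)^{6/5}$ (after rescaling by $\sum_j k_j^{1/3}$), whereupon both summands collapse to a common value $\sqrt{ab}\cdot(\sum_i(R_i/s_i)^\gamma)^{5/4}$ with the balancing exponent $\gamma$. Since $\sqrt{ab}=\|X\|_\tF\prod_j s_j/n^{3/4}$, the claimed rate follows, and a union bound over a polynomial grid in $\vec k$ costs only logarithmically and is absorbed into $\tcO$.

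\smallskip
\emph{Main obstacle.} The delicate point is Step 1: a single compressed class $\cG_{\vec k}$ must cover all $f\in\cF$ simultaneously (rather than merely provide a separate $\tilde f$ for each $f$), so the Maurey atoms must be drawn from a universal family large enough to span $\cF$ yet small enough to admit the covering bound of Step 2. This forces either matrix-Bernstein-type control on $\|W_i-\tilde W_i\|_2$ (so that layer-wise errors multiply by spectral rather than width-dependent constants), or a carefully structured atom family in which operator and Frobenius errors can be compared cleanly on the image of $X_{i-1}$. A secondary obstacle is that the sparsified $\tilde W_i$ must be projected back onto the operator-norm ball $\{\|W\|_2\le s_i\}$ without inflating the Frobenius error, and the polylog accounting across the union bound over $\vec k$ must be tight enough to yield $(\sum_i\ln R_i)^{1/4}$ rather than a larger exponent.
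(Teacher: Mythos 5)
Your overall skeleton --- Maurey-sparsify each layer, cover the sparsified class, balance the compression error against the cover entropy, finish with Dudley --- is exactly the paper's strategy (\Cref{fact:sparsify:2} plus \Cref{fact:infty_cover}), and your compression error $\|X\|_\tF\bigl[\prod_j s_j\bigr]\sum_i (R_i/s_i)k_i^{-1/2}$ matches \Cref{fact:sparsify:2}. The genuine gap is in Step 2, and it is not cosmetic: the asserted entropy $\ln N \lesssim \eps^{-2}\|X\|_\tF^2\prod_j s_j^2\bigl(\sum_i k_i^{1/3}\bigr)^3$ is both unjustified and inconsistent with the theorem you are trying to prove. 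A compressed layer with $k_i$ active rows, spectral norm $s_i$ and Frobenius norm $\le s_i\sqrt{k_i}$ has $(2,1)$-norm only bounded by $s_i k_i$ (or $R_i\sqrt{k_i}$ if you instead retain $\|\tilde W_i\|_\tF\le R_i$), so the per-layer term in a \citep{spec}-style cover is $k_i^{2/3}$ or $(R_i/s_i)^{2/3}k_i^{1/3}$, not $k_i^{1/3}$. If you carry your stated entropy through your own Step 3, the balancing exponent is forced to $\gamma=2/5$ (consistent with your $k_i\propto(R_i/s_i)^{6/5}$), yielding $\bigl(\sum_i(R_i/s_i)^{2/5}\bigr)^{5/4}$ rather than the theorem's $\bigl(\sum_i(R_i/s_i)^{4/5}\bigr)^{5/4}$; so as written the proposal proves a different (stronger, and not actually established) statement. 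The paper instead covers the sampled $k_j\times k_{j-1}$ submatrices \emph{entrywise} on a grid, so the entropy is quadratic in the widths ($\sum_j k_jk_{j-1}$ parameters); the widths are then chosen as functions of the cover scale, producing an $\eps^{-4}$ entropy, and it is this quadratic dependence that generates both the $4/5$ exponents and the $n^{-3/4}$ rate through the Dudley integral.

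Your ``main obstacle'' paragraph correctly senses the second difficulty but proposes the wrong fix. No matrix-Bernstein or operator-norm control of $W_i-\tilde W_i$ is needed; what the entrywise grid cover actually requires is control of the \emph{magnitudes of the entries} of the sampled matrices, which is why \Cref{fact:maurey:matrix_mult:bounded} thresholds small columns before sampling to guarantee $Z_j\le\|W\|_\tF\sqrt{m/k_j}$. Likewise, the compressed matrices need not be reprojected onto the spectral ball: the paper inserts Frobenius-ball projections $\Pi_j$ after each layer and tolerates the crude spectral bound $\|D_jW_jM_j\|_2\le\sqrt{m}\,\|W_j\|_\tF$ in the error propagation, absorbing it into the per-layer cover scales $\eps_j$; the resulting $\ln(1/\eps_j)$ terms are precisely where the $\bigl(\sum_i\ln R_i\bigr)^{1/4}$ factor comes from (not from the polylog of a spectral cover, as you suggest). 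To repair the proposal you would need to either prove a width-free $\eps^{-2}$ cover for the compressed class with the correct per-layer norms (and redo the balancing, which then does give $\gamma=4/5$), or switch to the entrywise cover and supply the entry-magnitude and projection bookkeeping above.
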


The term $\nicefrac {R_i}{s_i}$ is the square root of the \emph{stable rank} of weight matrix
$W_i$, and is a desirable quantity in a generalization bound: it scales more mildly
with width than terms like $\|W_i^\T\|_{2,1}$ and $\|W_i^\T\|_{\tF} \sqrt{\textup{width}}$
which often appear (the former appears in \Cref{fact:pollard:snb} and \Cref{fact:snb}).
Another stable rank bound was developed by \citet{suzuki2019compression}, but has an extra
mild dependence on width.

As depicted in \Cref{fig:srb}, however, this bound is not fully width-independent.  Moreover,
we can compare it to \Cref{fact:snb} throughout distillation, and not only does this bound
not capture the power of distillation, but also, eventually its bad dependence on
$n$ causes it to lose out to \Cref{fact:snb}.

\subsection{Additional notation}
\label{sec:notation}

Given data $(z_i)_{i=1}^n$, the \emph{Rademacher complexity} of univariate functions $\cH$ is
\[
  \Rad(\cH) := \bbE_{\vec \eps} \sup_{h\in\cH} \frac 1 n \sum_i \eps_i h(z_i),
  \qquad\textup{where }
  \eps_i \stackrel{\textup{i.i.d.}}{\sim}\textup{Uniform}(\{-1,+1\}).
\]
Rademacher complexity is the most common tool in generalization theory
\citep{shai_shai_book}, and is incorporated in \Cref{fact:main:fix:3} due to its
convenience and wide use.  To handle multivariate (multiclass) outputs, the definition
is overloaded via the worst case labels as
$\Rad_n(\cF) = \sup_{\vec y\in[k]^n} \Rad(\{(x,y)\mapsto f(x)_y : f\in\cF\})$.
This definition is for mathematical convenience, but overall not ideal;
Rademacher
complexity seems to have difficulty dealing with such geometries.

Regarding norms, $\|\cdot\|=\|\cdot\|_\tF$ will denote the Frobenius norm,
and $\|\cdot\|_2$ will denote spectral norm.

\section{Illustrative empirical results}
\label{sec:exp}

This section describes the experimental setup, and the main experiments:
\Cref{fig:bounds} showing progressive distillation,
\Cref{fig:srb} comparing \Cref{fact:rad:sr}, \Cref{fact:snb} and VC dimension,
\Cref{fig:width} showing width independence after distillation,
and \Cref{fig:random} showing the effect of random labels.

\begin{figure}[t]
  \centering
  \begin{subfigure}[b]{0.465\textwidth}
    \includegraphics[width = 1.0\textwidth]{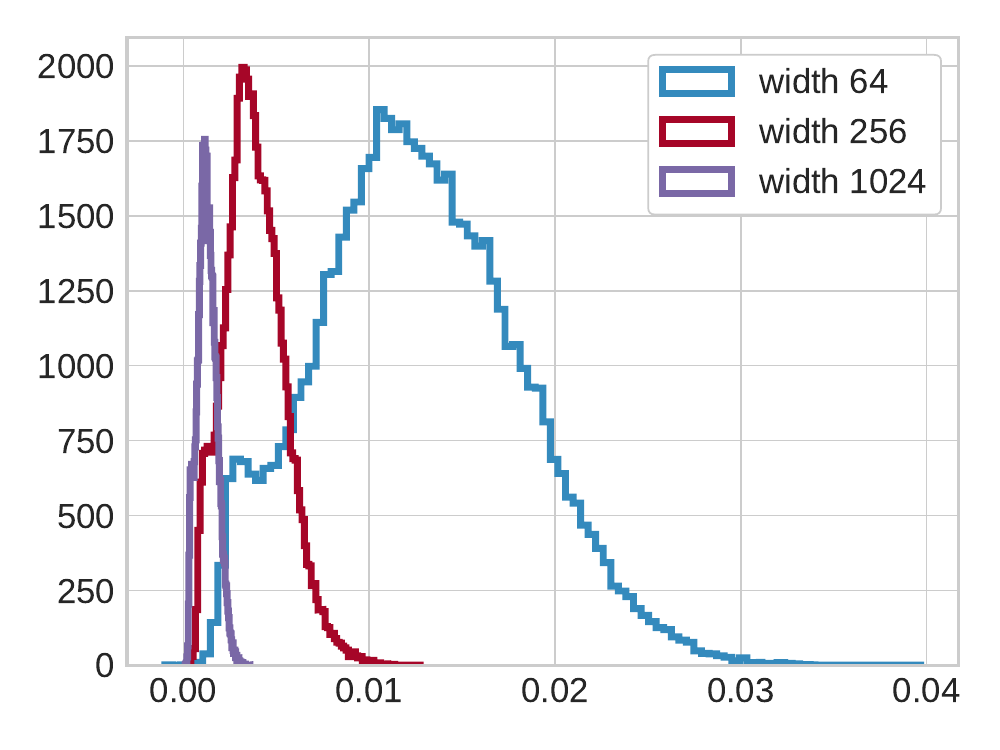}
    \caption{Margins before distillation.\label{fig:width:a}}
  \end{subfigure}
  ~\quad~
  \begin{subfigure}[b]{0.465\textwidth}
    \includegraphics[width = 1.0\textwidth]{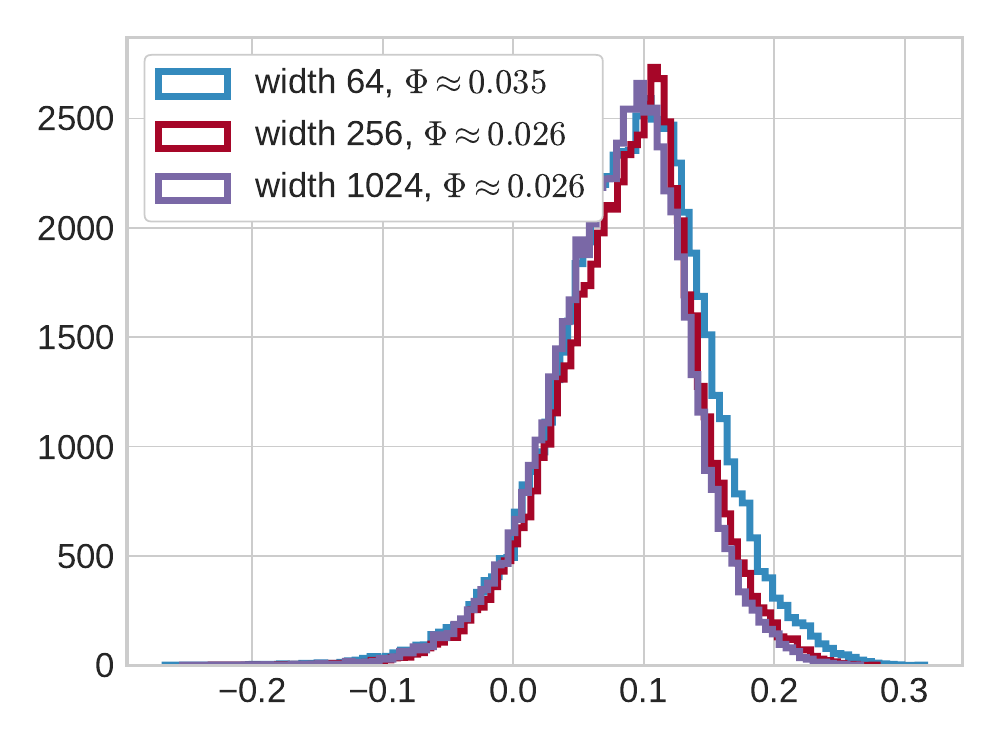}
    \caption{Margins after distillation.\label{fig:width:b}}
  \end{subfigure}
\caption{\textbf{Width independence.} Fully-connected 6-layer networks of widths
    $\{64, 256, 1024\}$ were trained on \mnist until training error zero; the margin histograms,
    normalized by the generalization bound in \Cref{fact:snb}, all differ, and are close to zero.
    After distillation, the margin distributions are far from zero and nearly the same.
    In the distillation legend, the second term $\Phi_{\gamma,m}$
    denotes the distillation distance, as defined in \Cref{eq:Phi}.
  Experiment details and an explanation of margin histograms appear in \Cref{sec:exp}.}
  \label{fig:width}
\end{figure}

\paragraph{Experimental setup.}
As sketched before, networks were trained in a standard way on either \cifar or \mnist,
and then \emph{distilled} by trading off between complexity and distillation distance
$\Phi_{\gamma,m}$.  Details are as follows.

\begin{enumerate}
  \item
    \textbf{Training initial network $f$.}
    In \Cref{fig:bounds,fig:srb:a}, the architecture was a ResNet8 based on one used
    in \citep{dawnbench}, and achieved test errors 0.067 and 0.008 on \cifar and \mnist,
    respectively, with no changes to the setup and a modest amount of training;
    the training algorithm was Adam;
    this and most other choices followed the scheme in \citep{dawnbench}
    to achieve a competitively low test error on \cifar.
    In \Cref{fig:srb:b,fig:width,fig:random}, a 6-layer fully connected network was
    used (width 8192 in \Cref{fig:srb:b}, widths $\{64, 256, 1024\}$ in \Cref{fig:width},
    width 256 in \Cref{fig:random}), and vanilla SGD was used to optimize.

  \item
    \textbf{Training distillation network $g$.}
    Given $f$ and a regularization strength $\lambda_j$, each distillation $g_j$
    was found via approximate minimization of the objective
    \begin{equation}
      g \mapsto \Phi_{\gamma,m}(f,g) + \lambda_j\textup{Complexity(g)}.
      \label{eq:dist_opt}
    \end{equation}
    In more detail,
    first $g_0$ was initialized to $f$ ($g$ and $f$ always used the same architecture)
    and optimized via \cref{eq:dist_opt} with $\lambda_0$ set to roughly
    $\nicefrac{\textup{risk}(f)}{\textup{Complexity}(f)}$, and thereafter $g_{j+1}$ was
    initialized to $g_j$ and found by optimizing \cref{eq:dist_opt} with
    $\lambda_{j+1} := 2\lambda_j$.
    The optimization method was the same as the one used to find $f$.
    The term $\textup{Complexity}(g)$ was some computationally reasonable approximation of
    \Cref{fact:snb}: for \Cref{fig:srb:b,fig:width,fig:random},
    it was just $\sum_i \|W_i^\T\|_{2,1}$, but for \Cref{fig:bounds,fig:srb:a}, it also
    included a tractable surrogate for the product of the spectral norms, which greatly
    helped distillation performance with these deeper architectures.

    In \Cref{fig:srb:b,fig:width,fig:random}, a full regularization sequence was not shown,
    only a single $g_j$.  This was chosen with a simple heuristic: amongst all $(g_j)_{j\geq 1}$,
    pick the one whose 10\% margin quantile is largest
    (see the definition and discussion of margins below).
\end{enumerate}

\begin{figure}[t]
  \centering
  \begin{subfigure}[b]{0.495\textwidth}
\includegraphics[width = 1.0\textwidth]{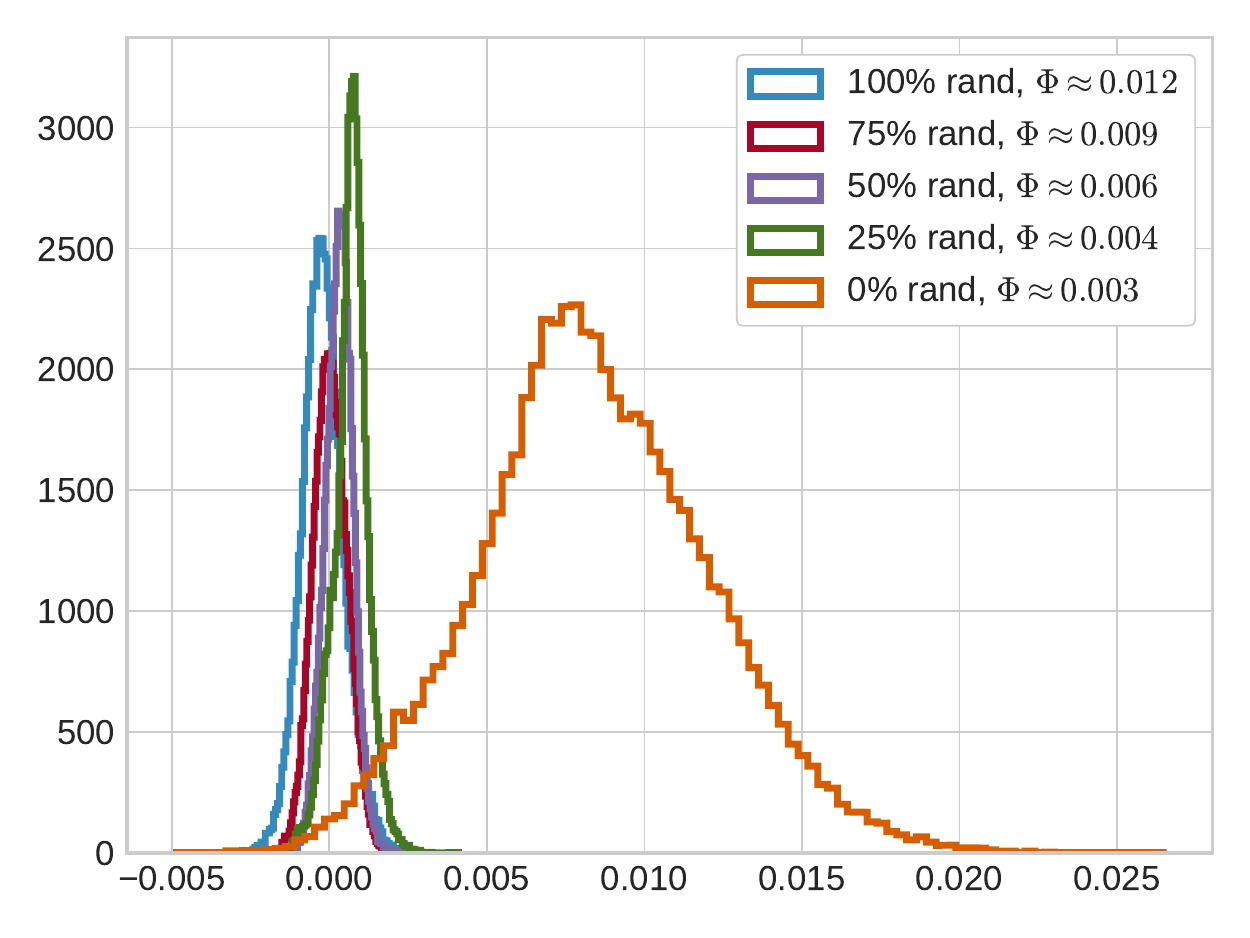}
  \caption{Permuting different fractions of labels.\label{fig:rand:all}}
  \end{subfigure}
  \begin{subfigure}[b]{0.495\textwidth}
\includegraphics[width = 1.0\textwidth]{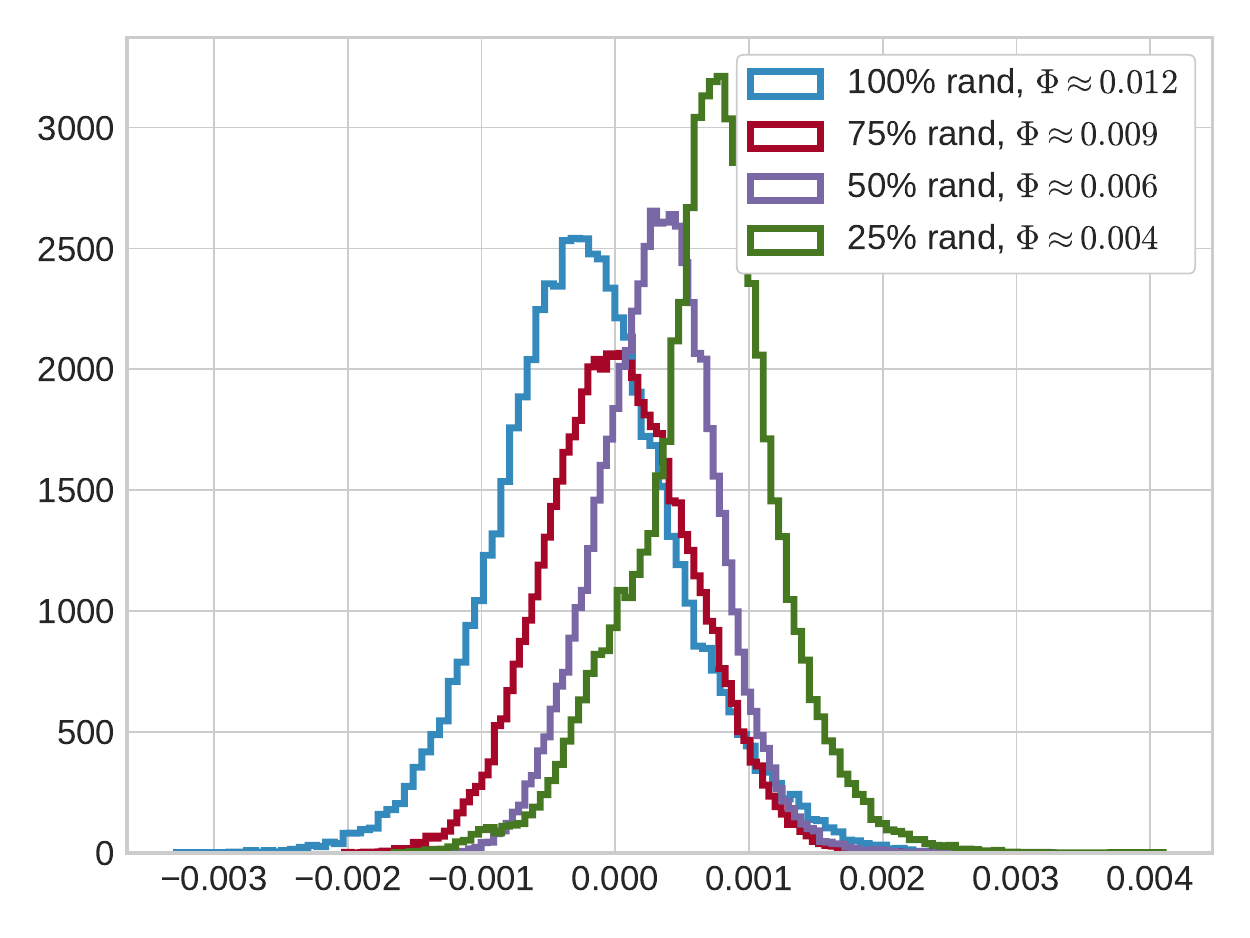}
    \caption{Zoomed in.\label{fig:rand:zoom}}
  \end{subfigure}
\caption{\textbf{Label randomization.}  Here $\{0\%,25\%,50\%,75\%,100\%\}$ of the
    labels were permuted across the respective experiments.  In all cases, the margin
    distribution is collapsed to zero.
For details, including
    an explanation of margin histograms,
  see \Cref{sec:exp}.}
  \label{fig:random}
\end{figure}

\paragraph{Margin histograms.}
\Cref{fig:srb:b,fig:width,fig:random} all depict \emph{margin histograms}, a flexible
tool to study the individual predictions of a network on all examples in a training set
(see for instance \citep{schapire_freund_book_final} for their use studying boosting,
and \citep{spec,margindist_predict} for their use in studying deep networks).
Concretely, given a predictor $g\in\cG$,
the prediction on every example is replaced with a real scalar called the
\emph{normalized margin} via
\[
  (x_i,y_i) \mapsto \frac {g(x_i)_{y_i} - \max_{j\neq y_i} g(x_i)_j}{\Rad_n(\cG)},
\]
where $\Rad_n(\cG)$ is the Rademacher complexity (cf. \Cref{sec:notation}),
and then the histogram of these $n$ scalars is plotted, with the horizontal axis values
thus corresponding to normalized margins.
By using Rademacher complexity as normalization, these margin distributions can be
compared across predictors and even data sets, and give a more fine-grained analysis
of the quality of the generalization bound.
This normalization choice was first studied in \citep{spec}, where it was also mentioned
that this normalization allows one to read off generalization bounds from the plot.
Here, it also suggests reasonable values for the softmax temperature $\gamma$.

\paragraph{\Cref{fig:bounds}: effect of distillation on generalization bounds.}
This figure was described before; briefly, a highlight is that in the initial phase,
training and testing errors
hardly change while bounds drop by a factor of nearly $10^{10}$.
Regarding ``generalization measure'', this term appears in studies of quantities
which correlate with generalization, but are not necessarily rigorous generalization
bounds \citep{fantastic_measures,roy_correlation}; in this specific case, the product of Frobenius
norms requires a dense ReLU network \citep{golowich}, and is invalid for the ResNet
(e.g., a complicated ResNet with a single identity residual block yields a value
$0$ by this measure).

\paragraph{\Cref{fig:srb:a}: comparison of \Cref{fact:rad:sr}, \Cref{fact:snb} and VC bounds.}
\Cref{fact:rad:sr} was intended to internalize distillation, but as in \Cref{fig:srb:a},
clearly a subsequent distillation still greatly reduces the bound.  While initially the 
bound is better than \Cref{fact:snb} (which does not internalize distillation),
eventually the $n^{1/4}$ factor causes it to lose out.  Also note that eventually the
bounds beat the VC bound, which has been identified as a surprisingly challenging baseline
\citep{arora_compression}.

\paragraph{\Cref{fig:width}: width independence.}
Prior work has identified that generalization bounds are quite bad at handling changes in width,
even if predictions and test error don't change much
\citep{nagarajan_kolter__uniform_convergence,fantastic_measures,roy_correlation}.
This is captured in \Cref{fig:width:a}, where the \emph{margin distributions} (see above)
with different widths are all very different, despite similar test errors.
However, following distillation, the margin histograms in \Cref{fig:width:b} are nearly
identical!  That is to say: distillation not only decreases loose upper bounds as before,
it tightens them to the point where they capture intrinsic properties of the predictors.

\paragraph{\Cref{fig:srb:b}: failure of width independence with \Cref{fact:rad:sr}.}
The bound in \Cref{fact:rad:sr} was designed to internalize compression, and there was
some hope of this due to the stable rank term.  Unfortunately, \Cref{fig:srb:b} shows that
it doesn't quite succeed: while the margin histograms are less separated than
for the undistilled networks in \Cref{fig:width:a}, they are still visibly separated
unlike the post-distillation histograms in \Cref{fig:width:b}.

\paragraph{\Cref{fig:random}: random labels.}
A standard sanity check for generalization bounds is whether
they can reflect the difficulty of fitting random labels \citep{rethinking}.  While
it has been empirically shown that Rademacher bounds do sharply reflect the presence of
random labels \citep[Figures 2 \& 3]{spec}, the effect is amplified with distillation:
even randomizing just 25\% shrinks the margin distribution significantly.

\section{Analysis overview and sketch of proofs}
\label{sec:analysis}

This section sketches all proofs, and provides
further context and connections to the literature.  Full proof details appear
in the appendices.

\subsection{Abstract data augmentation bounds in \Cref{sec:abstract}}

As mentioned in \Cref{sec:abstract}, the first step of the proof is to apply
\Cref{fact:phig} to obtain
\[
  \Pr_{x,y}[\argmax_{y'}f(x)_{y'} \neq y]
  \leq
  2 \bbE_{x} \enVert{ \phig(f(x)) - \phig(g(x)) }_1
  + 2 \bbE_{x,y} \del{ 1 - \phig(g(x))_y};
\]
this step is similar to how the ramp loss is used with margin-based generalization bounds,
a connection which is discussed in \Cref{sec:margins}.

\Cref{sec:abstract} also mentioned that the last term is easy: $\phig$ is
$(\nicefrac 1 \gamma)$-Lipschitz, and we can peel it off and only pay the Rademacher complexity
associated with $g\in\cG$.

With data augmentation, the first term is also easy:
\begin{align*}
  \bbE \Phi_{\gamma,m}(f,g)
  &=
  \int \|\phig(f(z)) - \phig(g(z))\|_1 \dif\mu_\cX(z)
  =
  \int \|\phig(f(z)) - \phig(g(z))\|_1 \frac {\dif \mu_\cX}{\dif\nu_n}\dif\nu_n(z)
  \\
  &\leq
  \enVert{\frac {\dif \mu_\cX}{\dif\nu_n}}_\infty \int \|\phig(f(z)) - \phig(g(z))\|_1 \dif\nu_n(z),
\end{align*}
and now we may apply uniform convergence to $\nu_n$ rather than $\mu_{\cX}$.  In the appendix,
this proof is handled with a bit more generality, allowing arbitrary norms, which may help in
certain settings.  All together, this leads to a proof of \Cref{fact:main:fix:3}.

For the explicit data augmentation estimate in \Cref{fact:augmentation}, the proof breaks into
roughly two cases: low density regions where the uniform sampling gives the bound, and high
density regions where the Gaussian sampling gives the bound.  In the latter case, the Gaussian
sampling in expectation behaves as a kernel density estimate, and the proof invokes
a standard bound \citep{jiang-kde}.

\subsection{Concrete data augmentation bounds in \Cref{sec:concrete}}

The main work in this proof is the following generalization bound for computation graphs,
which follows the proof scheme from \citep{spec},
though simplified in various ways, owing mainly to the omission of general matrix norm
penalties on weight matrices, and the omission of the \emph{reference matrices}.  The reference
matrices were a technique to center the weight norm balls away from the origin; a logical
place to center them was at initialization.  However, in this distillation setting, it is in
fact most natural to center everything at the origin, and apply regularization and shrink
to a well-behaved function (rather than shrinking back to the random initialization, which after
all defines a complicated function).
The proof also features a simplified $(2,1)$-norm matrix covering proof (cf. \Cref{fact:21}).

\begin{lemma}
  \label{fact:snb}
  Let data $X\in\R^{n\times d}$ be given.
  Let computation graph $\cG$ be given,
  where $\Pi_i$ projects to Frobenius-norm balls of radius $b_i\sqrt{n}$,
  and $\|W_i^\T\|_{2,1} \leq r_i$,
  and $\|[W_i \Pi_i D_i \mc F_i]\|_2 \leq s_i$,
  and $\Lip(\sigma_i)\leq \rho_i$,
  and all layers have width at most $m$.
  Then for every $\eps>0$ there exists a covering set $\cM$ satisfying
  \[
    \sup_{g\in\cG} \min_{\hat X \in \cM} \enVert{ g(X^\T) - \hat X } \leq \eps
    \quad\textup{and}\quad
    \ln|\cM| \leq
    \frac {2^{4/3} n \ln(2m^2)} {\eps^2}
    \sbr[4]{ \sum_i \del[3]{r_i b_i \rho_i \prod_{l=i+1}^Ls_l\rho_l}^{2/3} }^3.
  \]
  Consequently,
  \[
    \Rad(\cG)
    \leq
    \frac 4 n
    +
    12 \sqrt{\frac {\ln(2m^2)}{n}}
    \sbr[4]{ \sum_i \del[3]{r_i b_i \rho_i \prod_{l=i+1}^L s_l\rho_l}^{2/3} }^{3/2}.
  \]
\end{lemma}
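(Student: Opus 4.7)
The plan is a layer-by-layer covering argument in the style of \citep{spec}, adapted to the row-wise concatenation and projection $\Pi_i$ of the computation graph formalism. I would first build a Maurey-type $(2,1)$-norm cover for each trainable $W_i$ (Fact~\ref{fact:21}), propagate the per-layer errors through the graph to get a cover of $g(X^\T)$ at Frobenius scale $\eps$, and then convert to the stated Rademacher bound by Dudley's entropy integral.

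For the per-layer cover, the projection $\Pi_i$ is the key: it guarantees $\|\Pi_i D_i \hat X_{i-1}^\T\|_\tF \leq b_i\sqrt n$ uniformly in the input $\hat X_{i-1}$ produced by the previous cover, so Fact~\ref{fact:21} yields a Frobenius-scale-$\eps_i$ cover of $\{W\Pi_i D_i \hat X_{i-1}^\T : \|W^\T\|_{2,1}\leq r_i\}$ with log-size at most $r_i^2 b_i^2 n \ln(2m^2)/\eps_i^2$. Writing $M_i:=[W_i\Pi_i D_i \mc F_i]$ and letting $\hat X_i$ denote the layer-$i$ output with the covered weights, the $\rho_i$-Lipschitz property of $\sigma_i$, the spectral bound $\|M_i\|_2\leq s_i$ on the \emph{true} layer, and the cancellation of the fixed $F_i$ block in the difference $M_i\hat X_{i-1}^\T - \hat M_i \hat X_{i-1}^\T$ together yield the recursion
\[
  \|X_i - \hat X_i\|_\tF \;\leq\; \rho_i s_i\,\|X_{i-1}-\hat X_{i-1}\|_\tF + \rho_i\eps_i,
\]
which unrolls to $\|X_L-\hat X_L\|_\tF \leq \sum_i \rho_i\eps_i \prod_{l>i}\rho_l s_l$.

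Setting this sum equal to $\eps$ and minimizing $\sum_i r_i^2 b_i^2 n \ln(2m^2)/\eps_i^2$ subject to $\sum_i \eps_i\rho_i\prod_{l>i}\rho_l s_l \leq \eps$ is a Hölder optimization with the characteristic $(\tfrac23,3)$-conjugate exponents appearing throughout this literature: writing $\alpha_i:=r_i b_i\rho_i\prod_{l>i}\rho_l s_l$, the optimum is $\eps_i\propto\alpha_i^{2/3}$ and the total log-cover-size is $\lesssim \frac{n\ln(2m^2)}{\eps^2}\bigl(\sum_i\alpha_i^{2/3}\bigr)^3$, matching the claim up to the stated constant. For the Rademacher bound I would invoke Dudley's entropy integral on the outputs viewed under the empirical $L_2$ metric (using $\|\cdot\|_\tF=\sqrt n\,\|\cdot\|_{L_2(\mu_n)}$), truncate at scale $\alpha=1/n$ to absorb the logarithm produced by integrating $\eps^{-1}$ over the dyadic range, and thereby obtain $\Rad(\cG)\leq \tfrac{4}{n} + 12\sqrt{\ln(2m^2)/n}\bigl(\sum_i\alpha_i^{2/3}\bigr)^{3/2}$.

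The main obstacle is the careful bookkeeping around the row-wise concatenation with the fixed $F_i$: only the trainable block $W_i$ is covered, so one cannot use any spectral bound on the \emph{covered} layer. The error propagation must instead use $\|M_i\|_2\leq s_i$ on the true layer, and one must verify that the $F_i$ blocks on both sides of the layer-$i$ difference cancel exactly so the per-layer covering problem reduces to $W_i\Pi_i D_i$ alone. The projection $\Pi_i$ is essential for keeping the Maurey input Frobenius-norm at the clean value $b_i\sqrt n$ rather than a data-dependent quantity that would compound through depth; once this is in place the rest (Maurey, Hölder, Dudley) is routine, and corresponds precisely to the simplifications advertised in the text (no general matrix penalties and no reference matrices).
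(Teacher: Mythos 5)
Your proposal matches the paper's proof essentially step for step: the per-layer $(2,1)$-norm Maurey cover of \Cref{fact:21} applied to $W_i\Pi_i D_i \hat X_{i-1}^\T$ (with the projection supplying the uniform radius $b_i\sqrt{n}$ and the cover for layer $i$ built on top of the covered input, so cardinalities multiply), the error recursion using $\|[W_i\Pi_iD_i\mc F_i]\|_2\leq s_i$ on the true layer together with the cancellation of the fixed $F_i$ block, the Lagrangian/H\"older balancing with exponents $(2/3,3)$, and a truncated Dudley integral. The one detail to adjust is the truncation point: taking $\zeta = 1/n$ leaves a residual $\ln n$ from $\int_\zeta^{\sqrt n}\eps^{-1}\,\dif\eps$ that the stated bound does not contain, whereas the paper truncates at $\zeta \asymp \tau/\sqrt{n}$ (with $\tau^2/\eps^2$ the log-covering number), which renders the logarithmic term nonpositive and yields the clean $4/n + 12\tau/n$ form.
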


From there, the proof of \Cref{fact:pollard:snb} follows via \Cref{fact:main:fix:3,fact:augmentation},
and many union bounds.

\subsection{Direct uniform convergence approach in \Cref{fact:rad:sr}}

As mentioned before, the first step of the proof is to sparsify the network,
specifically each matrix product.  Concretely, given
weights $W_i$ of layer $i$, letting $X_{i-1}^\T$ denote the input to this layer,
then
\[
  W_i X_{i-1}^\T = \sum_{j=1}^m (W_i\be_j) (X_{i-1}\be_j)^\T.
\]
Written this way, it seems natural that the matrix product should ``concentrate'',
and that considering all $m$ outer products should not be necessary.  Indeed,
exactly such an approach has been followed before to analyze randomized matrix multiplication
schemes \citep{sarlos}. As there is no goal of high probability here,
the analysis is simpler, and follows from the Maurey lemma (cf. \Cref{fact:maurey}),
as is used in the $(2,1)$-norm matrix covering bound in \Cref{fact:21}.

\begin{lemma}
  \label{fact:sparsify:2}
  Let a network be given with $1$-Lipschitz homogeneous activations $\sigma_i$
  and weight matrices $(W_1,\ldots,W_L)$ of maximum width $m$,
  along with data matrix $X\in\R^{n\times d}$
  and desired widths $(k_1,\ldots,k_L)$ be given.
  Then there exists a sparsified network output, recursively defined via
  \[
    \hX_0^\T := X^\T,
    \quad\textup{and}\quad
    \hX_i^\T := \Pi_i \sigma_i(W_i M_i X_{i-1}^\T),
    \quad\textup{where}\quad
    M_i := \sum_{j\in S_i} \frac {Z_j\bfe_j\bfe_j^\T}{\|A\be_j\|},
  \]
  where $S_i$ is a multiset of $k_i = |S_i|$ indices, $\Pi_i$ denotes projection
  onto the Frobenius-norm ball of radius $\|X\|_\tF \prod_{j\leq i}\|W_j\|_2$,
  and the scaling term $Z_j$ satisfies $Z_j \leq \|W_k\|_\tF \sqrt{m/k_j}$,
  and
  \[
    \|\sigma_L(W_L\cdots\sigma_1(W_1X^\T)\cdots) - \hX_L^\T\|_\tF \leq
    \|X\|_\tF
    \sbr{ \prod_{i=1}^L \|W_i\|_2 }
    \sum_{i=1}^L
\sqrt{\frac{\|W_i\|_\tF^2}{k_i\|W_i\|_2^2}},
  \]
\end{lemma}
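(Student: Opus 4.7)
The plan is to construct the sparsification layer by layer via a Maurey-style randomised sampling of the outer-product decomposition $W_iY=\sum_j (W_i\bfe_j)(Y^\T\bfe_j)^\T$ with $k_i$ draws, and then propagate the resulting per-layer error using $1$-Lipschitzness and $1$-homogeneity of $\sigma_i$ together with the Frobenius-norm projection $\Pi_i$. The workhorse is a single-layer sampling estimate: for any matrices $W,Y$ and any integer $k\ge 1$, there exists $M=\sum_{t=1}^k (kp_{j_t})^{-1}\bfe_{j_t}\bfe_{j_t}^\T$ satisfying $\|WMY-WY\|_\tF\le \|W\|_\tF\|Y\|_\tF/\sqrt k$. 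This is proved by sampling $j_t$ iid from $p_j:=\|W\bfe_j\|^2/\|W\|_\tF^2$, which makes $\bbE[WMY]=WY$, and bounding the variance via $\sum_j\|W\bfe_j\|^2\|Y^\T\bfe_j\|^2/p_j=\|W\|_\tF^2\|Y\|_\tF^2$, yielding $\bbE\|WMY-WY\|_\tF^2\le\|W\|_\tF^2\|Y\|_\tF^2/k$ and hence a deterministic realisation; matching against the lemma's formula identifies $A=W_i$ and $Z_j=\|W_i\|_\tF^2/(k_i\|W_i\bfe_j\|)$.

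Next I would iterate the construction. Set $\hX_0:=X$ and, given $\hX_{i-1}$, apply the single-layer estimate with $W=W_i$ and $Y=\hX_{i-1}^\T$ to pick $M_i$ with $\|W_iM_i\hX_{i-1}^\T - W_i\hX_{i-1}^\T\|_\tF\le \|W_i\|_\tF\|\hX_{i-1}\|_\tF/\sqrt{k_i}$, then define $\hX_i^\T:=\Pi_i\sigma_i(W_iM_i\hX_{i-1}^\T)$. Since $\sigma_i$ is $1$-homogeneous and $1$-Lipschitz, $\sigma_i(0)=0$ and the true $X_i$ satisfies $\|X_i\|_\tF\le \|X\|_\tF\prod_{j\le i}\|W_j\|_2$, so $X_i^\T$ lies inside the projection ball and $\Pi_i$ is non-expansive towards it. Writing $E_i:=\|X_i^\T-\hX_i^\T\|_\tF$, the triangle inequality together with $1$-Lipschitzness of $\sigma_i$ and the norm bound $\|\hX_{i-1}\|_\tF\le\|X\|_\tF\prod_{j<i}\|W_j\|_2$ enforced by $\Pi_{i-1}$ gives
\[
E_i \;\le\; \|W_i\|_2\,E_{i-1} \;+\; \|X\|_\tF\,\|W_i\|_\tF\,\prod_{j<i}\|W_j\|_2\,/\sqrt{k_i};
\]
unrolling this recursion with $E_0=0$ and regrouping the spectral-norm products yields exactly the claimed bound.

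The main obstacle is primarily bookkeeping: each $M_i$ must be chosen after $\hX_{i-1}$ is determined, so a single deterministic realisation of $(M_1,\dots,M_L)$ attaining the bound is obtained by chaining the existence argument inductively over layers, with the projections $\Pi_i$ playing the essential role of decoupling $\|\hX_{i-1}\|_\tF$ from earlier sampling noise and keeping each per-layer estimate uniform in the preceding samples. A secondary technical point is the explicit size bound on $Z_j$ asserted in the statement, which requires a mild truncation of $p_j$ away from tiny values of $\|W_i\bfe_j\|$ to prevent $(k_ip_j)^{-1}$ from blowing up; this truncation can be absorbed into the variance calculation at negligible cost.
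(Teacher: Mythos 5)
Your proposal is correct and follows essentially the same route as the paper: Maurey-style importance sampling of the outer-product decomposition $W_iY=\sum_j(W_i\bfe_j)(Y^\T\bfe_j)^\T$ per layer (the paper's \Cref{fact:maurey:matrix_mult}), a truncation of columns with small $\|W_i\bfe_j\|$ to enforce $Z_j\le\|W_i\|_\tF\sqrt{m/k_i}$ (the paper's \Cref{fact:maurey:matrix_mult:bounded}, which sets $\tau=\|A\|_\tF/\sqrt{mk}$ and pays only a constant factor), and the same projection-plus-Lipschitz recursion $E_i\le\|W_i\|_2E_{i-1}+\|W_i\|_\tF\|X\|_\tF\prod_{j<i}\|W_j\|_2/\sqrt{k_i}$ unrolled from $E_0=0$. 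No gaps.
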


The statement of this lemma is lengthy and detailed because the exact guts of the
construction are needed in the subsequent generalization proof.  Specifically,
now that there are few nodes, a generalization bound sensitive to narrow networks can be
applied.  On the surface, it seems reasonable to apply a VC bound, but this approach
did not yield a rate better than $n^{-1/6}$, and also had an explicit dependence on the
depth of the network, times other terms visible in \Cref{fact:rad:sr}.

Instead, the approach here, aiming for a better dependence on $n$ and also no explicit
dependence on network depth, was to produce an $\infty$-norm covering number bound
(see \citep{long2019generalization} for a related approach), with
some minor adjustments (indeed, the $\infty$-norm parameter covering approach was applied to
obtain a Frobenius-norm bound, as in \Cref{fact:snb}).  Unfortunately, the magnitudes of weight
matrix entries must be controlled for this to work (unlike the VC approach), and this necessitated
the detailed form of \Cref{fact:sparsify:2} above.

To close with a few pointers to the literature,
as \Cref{fact:sparsify:2} is essentially a pruning bound, it is potentially of
independent interest; see for instance the literature on lottery tickets and pruning
\citep{lottery_ticket,frankle_roy__pruning,jason__pruning}.
Secondly, there is already one generalization bound in the literature which exhibits spectral
norms, due to \citet{suzuki2019compression}; unfortunately, it also has an explicit dependence
on network width.

\subsubsection*{Acknowledgments}
MT thanks Vaishnavh Nagarajan for helpful discussions and suggestions.
ZJ and MT are grateful for support from the NSF under grant IIS-1750051, and
from NVIDIA under a GPU grant.

\bibliographystyle{plainnat}
\bibliography{bib}

\clearpage
\appendix

\section{Proofs for \Cref{sec:abstract}}

The first step is an abstract version of \Cref{fact:main:fix:3} which
does not explicitly involve the softmax, just bounded functions.

\begin{lemma}
  \label{fact:main:fix}
  Let classes of bounded functions $\cF$ and $\cG$ be given
  with $\cF \ni f : \cX \to [0,1]^k$
  and $\cG \ni g : \cX \to [0,1]^k$.
  Let conjugate exponents $1/p + 1/q = 1$ be given.
  Then with probability at least $1-2\delta$ over the draw of $((x_i,y_i))_{i=1}^n$ from $\mu$
  and $(z_i)_{i=1}^m$ from $\nu_n$,
  for every $f\in\cF$ and $g\in\cG$,
  \begin{align*}
    \bbE f(x)_y
    &
    \leq
      \frac 1 n \sum_{i=1}^n g(x_i)_{y_i}
      + 2 \Rad_n\del{\cbr{(x,y) \mapsto g(x)_y : g\in \cG}}
      + 3 \sqrt{\frac{\ln(1/\delta)}{2n}}
      \\
      &
      \quad+
      \enVert{ \frac {\dif \mu_{\cX}}{\dif \nu_n} }_{L_q(\nu_n)}
      \Bigg(
        \frac 1 m \sum_{i=1}^m \|f(z_i) - g(z_i)\|^p_p
        + 3 \sqrt{\frac{\ln(1/\delta)}{2m}}
        \\
      &\qquad
        + 2 \Rad_m\del{\cbr{z\mapsto \min\{1, \|f(z)-g(z)\|_p^p\} : f\in \cF, g\in\cG } }
      \Bigg)^{1/p}
  \end{align*}
  where
  \begin{align*}
    &
    \Rad_m\del{\cbr{z\mapsto \min\{1, \|f(z)-g(z)\|_p^p\} : f\in \cF, g\in\cG } }
    \\
    &\leq
    p \sum_{y'=1}^k\sbr{
      \Rad_m(\{ z\mapsto f(z)_{y'} : f\in \cF\})
    + \Rad_m(\{ z\mapsto g(z)_{y'} : g\in \cG \}) }.
  \end{align*}
\end{lemma}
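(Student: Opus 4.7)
The plan is to decompose $\bbE_\mu f(x)_y = \bbE_\mu g(x)_y + \bbE_\mu(f(x)_y - g(x)_y)$ and apply uniform convergence to each summand on its natural sample. For the first term, since $g(x)_y \in [0,1]$, a standard symmetrization--McDiarmid argument over the class $\{(x,y)\mapsto g(x)_y : g\in\cG\}$ on the training sample delivers, with probability at least $1-\delta$, $\bbE_\mu g(x)_y \leq \frac 1 n\sum_{i=1}^n g(x_i)_{y_i} + 2\Rad_n(\{(x,y)\mapsto g(x)_y : g\in\cG\}) + 3\sqrt{\ln(1/\delta)/(2n)}$ uniformly over $g\in\cG$, recovering the first line of the claim.

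For the second term, the key pointwise observation is $|f(x)_y - g(x)_y| \leq \min\{1,\|f(x)-g(x)\|_p\}$, combining $f,g\in[0,1]^k$ (for the $\leq 1$ bound) with $\|\cdot\|_\infty \leq \|\cdot\|_p$ at $p\geq 1$ (for the $\leq \|f-g\|_p$ bound); crucially, the right-hand side is independent of $y$, so $\bbE_\mu(f_y - g_y) \leq \bbE_{\mu_\cX}\min\{1,\|f-g\|_p\}$. Rewriting this expectation against $\nu_n$ via the Radon--Nikodym derivative, applying H\"older with conjugate exponents $p,q$, and using the elementary identity $\min\{1,t\}^p = \min\{1,t^p\}$ for $t\geq 0$ and $p\geq 1$ yields $\bbE_{\mu_\cX}\min\{1,\|f-g\|_p\} \leq \bigl\|\frac{\dif\mu_\cX}{\dif\nu_n}\bigr\|_{L_q(\nu_n)} \bigl(\bbE_{\nu_n}\min\{1,\|f-g\|_p^p\}\bigr)^{1/p}$. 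Since $\min\{1,\|f-g\|_p^p\}\in[0,1]$, a second symmetrization--McDiarmid step, conditional on the training sample and applied on $(z_i)_{i=1}^m\sim\nu_n$, gives with probability $1-\delta$ the bound $\bbE_{\nu_n}\min\{1,\|f-g\|_p^p\} \leq \frac 1 m\sum_i \|f(z_i)-g(z_i)\|_p^p + 2\Rad_m(\{z\mapsto\min\{1,\|f(z)-g(z)\|_p^p\}\}) + 3\sqrt{\ln(1/\delta)/(2m)}$, where the $\min$ inside the empirical term is dropped via $\min\{1,\cdot\}\leq(\cdot)$. A union bound over the two failure events delivers the claimed $1-2\delta$ confidence.

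The Rademacher bound on $\{z\mapsto\min\{1,\|f(z)-g(z)\|_p^p\}\}$ follows by three layers of Ledoux--Talagrand contraction: the outer map $\min\{1,\cdot\}$ is $1$-Lipschitz and vanishes at $0$, so it passes through at no cost; the sum $\|f-g\|_p^p = \sum_{y'}|f(z)_{y'}-g(z)_{y'}|^p$ splits the Rademacher average into $k$ per-coordinate pieces; and the map $t\mapsto|t|^p$ is $p$-Lipschitz on $[-1,1]$ (derivative bounded by $p$) and vanishes at $0$, producing the factor of $p$, after which $f_{y'}-g_{y'}$ splits as $f_{y'} + (-g_{y'})$, giving the two Rademacher terms. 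The main obstacle, and the observation that makes the whole argument clean, is identifying the pointwise bound $|f_y - g_y| \leq \min\{1,\|f-g\|_p\}$: without clipping at $1$ the relevant function ranges in $[0,k]$ and direct concentration would pay an unwanted factor of $k$ in the subgaussian tail; without the $\|\cdot\|_p$ estimate the H\"older exponents cannot be lined up so that $\|\dif\mu_\cX/\dif\nu_n\|_{L_q(\nu_n)}$ appears at the right power next to the sample-side quantity $\frac 1 m\sum_i\|f(z_i)-g(z_i)\|_p^p$.
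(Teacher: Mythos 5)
Your proposal is correct and follows essentially the same route as the paper's proof: the same decomposition into $\bbE g(x)_y$ plus $\bbE(f(x)-g(x))_y$, the same pointwise clipping $\leq\min\{1,\|f-g\|_p\}$ followed by the change of measure and H\"older, the same two symmetrization steps with a union bound, and the same three-stage contraction (peel $\min\{1,\cdot\}$, split over the $k$ coordinates, use $p$-Lipschitzness of $|t|^p$ on $[-1,1]$ and the sign-symmetry of the Rademacher variables) for the final complexity estimate. No gaps.
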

\begin{proof}[Proof of \Cref{fact:main:fix}]
  To start, for any $f\in\cF$ and $g\in\cG$, write
  \[
    \bbE f(x)_y
    = 
    \bbE (f(x) - g(x))_y + \bbE g(x)_y.
  \]
  The last term is easiest, and let's handle it first: by standard
  Rademacher complexity arguments \citep{shai_shai_book},
  with probability at least $1-\delta$,
  every $g\in\cG$ satisfies
  \[
    \bbE g(x)_y
    \leq
    \frac 1 n \sum_{i=1}^n g(x_i)_{y_i}
    + 2 \Rad_n(\{ (x,y) \mapsto g(x)_y : g\in \cG\})
    + 3 \sqrt{\frac {\ln(1/\delta)}{2n}}.
  \]
  For the first term, since $f:\cX \to [0,1]^k$ and $g:\cX\to[0,1]^k$,
  by H{\"o}lder's inequality
  \begin{align*}
    \bbE (f(x) - g(x))_y
    &= \int \min\{ 1, (f(x)-g(x))_y \} \dif\mu(x,y)
    \\
    &\leq \int \min\{1,  \|f(x) - g(x)\|_p\} \dif\mu(x,y)
    \\
    &= \int  \min\{1,  \|f(x) - g(x)\|_p\} \frac {\dif \mu_\cX}{\dif \nu_n}(x) \dif\nu_n(x)
    \\
    &\leq \enVert{\min\cbr{1,\| f-g\|_p} }_{L_p(\nu_n)}
    \enVert{ \frac {\dif \mu_\cX}{\dif \nu_n} }_{L_q(\nu_n)}.
  \end{align*}
  Once again invoking standard Rademacher complexity arguments \citep{shai_shai_book},
  with probability at least $1-\delta$,
  every mapping  $z\mapsto \min\{1, \|f(z)-g(z)\|_p^p\}$ where $f\in \cF$ and $g\in \cG$ satisfies
  \begin{align*}
    \int \min\{1, \|f(z) - g(z)\|_p^p\} \dif\nu_n(z)
    &\leq
    \frac 1 m \sum_{i=1}^m \min\{1,  \|f(z_i) - g(z_i)\|_p^p\}
    + 3 \sqrt{\frac {\ln(1/\delta)}{2m}}
    \\
    &+ 2\Rad_m\del{\cbr{z\mapsto \min\{1, \|f(z)-g(z)\|_p^p\} : f\in \cF, g\in\cG } }.
  \end{align*}
  Combining these bounds and unioning the two failure events
  gives the first bound.

  For the final Rademacher complexity estimate,
  first note $r\mapsto \min\{1,r\}$ is 1-Lipschitz and can be
  peeled off, thus
  \begin{align*}
    &m\Rad_m\del{\cbr{z\mapsto \min\{1, \|f(z)-g(z)\|_p^p\} : f\in \cF, g\in\cG } }
    \\
    &\leq
    m\Rad_m\del{\cbr{z\mapsto \|f(z)-g(z)\|_p^p : f\in \cF, g\in\cG } }
    \\
    &=
    \bbE_{\eps} \sup_{\substack{f\in\cF\\g\in\cG}} \sum_{i=1}^m \eps_i \|f(z_i)-g(z_i)\|_p^p
    \\
    &\leq
    \sum_{y'=1}^k \bbE_{\eps} \sup_{\substack{f\in\cF\\g\in\cG}} \sum_{i=1}^m \eps_i |f(z_i)-g(z_i)|_{y'}^p
    \\
    &=
    \sum_{y'=1}^k m\Rad_m\del{\cbr{z\mapsto |f(z)-g(z)|_{y'}^p : f\in \cF, g\in\cG } }.
  \end{align*}
  Since $f$ and $g$ have range $[0,1]^k$, then $(f-g)_{y'}$ has range $[-1,1]$ for every $y'$,
  and since $r \mapsto |r|^p$ is $p$-Lipschitz over $[-1,1]$ (for any $p\in [1,\infty)$,
  combining this with the Lipschitz composition rule for Rademacher complexity
  and also the fact that a Rademacher random vector $\eps \in \{\pm 1\}^m$
  is distributionally equivalent to its coordinate-wise negation $-\eps$,
  then, for every $y'\in[k]$,
  \begin{align*}
    &
    \Rad_m(\{z\mapsto |f(z)-g(z)|^p_{y'} : f\in \cF, g\in\cG\})
    \\
    &\leq
    p \Rad_m(\{z\mapsto (f(z)-g(z))_{y'} : f\in \cF, g\in\cG\})
    \\
    &= \frac{p}{m} \bbE_\eps \sup_{f\in\cF} \sup_{g\in\cG} \sum_{i=1}^m \eps_i (f(z_i) - g(z_i))_{y'}
    \\
    &= \frac{p}{m}\bbE_\eps \sup_{f\in\cF} \sum_{i=1}^m \eps_i f(z_i)_{y'}
     + \frac{p}{m}\bbE_\eps \sup_{g\in\cG} \sum_{i=1}^m -\eps_i g(z_i)_{y'}
    \\
    &= p \Rad_m(\{ z\mapsto f(z)_{y'} : f\in \cF\})
    + p \Rad_m(\{ z\mapsto g(z)_{y'} : g\in \cG \}).
  \end{align*}
\end{proof}

To prove \Cref{fact:main:fix:3}, it still remains to collect a few convenient properties
of the softmax.

\begin{lemma}
  \label{fact:phig}
For any $v\in\R^k$ and $y\in\{1,\ldots,k\}$,
\[
    2(1- \phig(v))_y
\geq \1[ y \neq \argmax_i v_i ].
  \]
  Moreover, for any functions $\cF$ with $\cF\ni f : \cX \to \R^k$,
  \[
    \Rad_n\del{\cbr{ (x,y)\mapsto \phig(f(x))_y : f\in\cF}}
    =
    \tcO\del{ \frac {\sqrt k} \gamma \Rad_n(\cF) }.
  \]
\end{lemma}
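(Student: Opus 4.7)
The plan is to prove the two claims separately, each via a direct computation on the softmax formula. For the first (indicator) claim, I would argue by cases: if $y=\argmax_i v_i$ then the right-hand side is zero and the inequality is trivial, so suppose otherwise. Then there exists some $j^{*}\neq y$ with $v_{j^*}\ge v_y$, so the softmax denominator $\sum_i \exp(v_i/\gamma)$ is at least $\exp(v_y/\gamma)+\exp(v_{j^*}/\gamma) \ge 2\exp(v_y/\gamma)$, forcing $\phig(v)_y \le 1/2$ and hence $2(1-\phig(v)_y) \ge 1$.

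For the Rademacher complexity bound, the first step is to compute the Lipschitz constant of each coordinate map $v\mapsto \phig(v)_y$ via its gradient $\nabla\phig(v)_y = (\phig(v)_y/\gamma)(e_y - \phig(v))$, whose Euclidean norm is $O(1/\gamma)$ uniformly (using $\phig(v)\in\Delta_{k-1}$ so that $\|e_y-\phig(v)\|_2\le \sqrt{2}$ and $\phig(v)_y\le 1$). I would then apply Maurer's vector-valued Rademacher contraction inequality to peel off the softmax layer, paying a factor of $O(1/\gamma)$ and reducing the target to the joint $k$-coordinate Rademacher complexity $\frac{1}{n}\bbE_\eps \sup_{f\in\cF}\sum_{i=1}^n\sum_{y'=1}^k \eps_{i,y'} f(x_i)_{y'}$, with independent signs across both indices.

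The final step is to bound this joint complexity by $\tcO(\sqrt{k})\Rad_n(\cF)$, absorbing $\sqrt{\log k}$ factors into $\tcO$. The naive route of pushing $\sum_{y'}$ outside the $\sup$ would cost a factor of $k$ rather than $\sqrt{k}$; the improvement requires a Slepian-style Gaussian comparison or chaining argument exploiting that the full softmax Jacobian $(1/\gamma)(\mathrm{diag}(\phig(v))-\phig(v)\phig(v)^\T)$, a scaled categorical covariance, has spectral norm at most $1/\gamma$ independently of $k$, together with the simplex constraint $\phig(v)\in\Delta_{k-1}$. The main obstacle is precisely this $\sqrt{k}$-versus-$k$ gap: the elementary Maurer-plus-union-bound route delivers only $O(k/\gamma)\Rad_n(\cF)$, so obtaining the claimed $\tcO(\sqrt{k}/\gamma)$ rate requires either a refined contraction leveraging the output constraint or a chaining argument with an absorbed $\sqrt{\log k}$ overhead.
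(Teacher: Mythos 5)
Your proof of the first claim is correct and is exactly the paper's argument: split on whether $y$ attains the argmax, and in the non-argmax case note that the denominator of the softmax dominates $2\exp(v_y/\gamma)$ so that $\phig(v)_y\leq 1/2$.

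For the Rademacher bound there is a genuine gap, and you have correctly located it yourself: after peeling off the softmax via Maurer's $\ell_2$ vector contraction you are left with the joint complexity $\frac1n\bbE_\eps\sup_f\sum_{i}\sum_{y'}\eps_{i,y'}f(x_i)_{y'}$, and the only elementary way to relate this to the paper's worst-label quantity $\Rad_n(\cF)=\sup_{\vec y}\Rad(\{(x,y)\mapsto f(x)_y\})$ is to pull the sum over $y'$ outside the supremum, costing a factor $k$ rather than $\sqrt k$. Your proposed repairs (a Slepian comparison or a chaining argument exploiting the spectral norm of the softmax Jacobian) are not carried out, and it is not clear they can be made to work \emph{after} the $\ell_2$ contraction step, since at that point the structure of the softmax has already been discarded. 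The missing idea is to use a different Lipschitz structure from the outset: the paper shows $\|\nabla\phig(v)_y\|_1\leq 1/\gamma$, i.e.\ each coordinate of the softmax is $(1/\gamma)$-Lipschitz with respect to the $\ell_\infty$ norm on its input, and then invokes the $\ell_\infty$ vector contraction inequality of Foster and Rakhlin (Theorem~1 of the cited multivariate Lipschitz composition lemma), which is precisely the tool that converts $\ell_\infty$-Lipschitz compositions into a $\tcO(\sqrt k)$ overhead over the per-coordinate Rademacher complexity. Without that lemma (or an equivalent), your outline only establishes the weaker $O(k/\gamma)\Rad_n(\cF)$ bound, which you acknowledge; so as written the second claim is not proved.
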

\begin{proof}
For the first property, let $v\in\R^k$ be given, and consider two cases.
  If $y = \argmax_i v_i$, then $\phig(v) \in [0,1]^k$ implies
  \[
    2\del{1-\phig(v)}_y \geq 0 = \1[y \neq \argmax_i v_i].
  \]
  On the other hand, if $y \neq \argmax_i v_i$,
  then $\phig(v)_y \leq 1/2$,
  and
  \[
    2\del{1-\phig(v)}_y \geq 1 = \1[y \neq \argmax_i v_i].
  \]

  The second part follows from a multivariate Lipschitz composition lemma for Rademacher
  complexity due to \citet[Theorem 1]{dylan_multivariate_lipschitz}; all that remains to prove
  is that $v\mapsto \phig(v)_y$ is $(1/\gamma)$-Lipschitz with respect to the $\ell_\infty$ norm
  for any $v\in\R^k$ and $y\in[k]$.  To this
  end, note that
  \[
    \frac {\dif}{\dif v_y} \phig(v)_y = \frac {\exp(v/\gamma)_y \sum_{j\neq y} \exp(v/\gamma)_j}{\gamma (\sum_j \exp(v/\gamma)_j)^2}
    ,
    \qquad
    \frac {\dif}{\dif v_{i\neq y}} \phig(v)_y = - \frac {\exp(v/\gamma)_y \exp(v/\gamma)_i}{\gamma (\sum_j \exp(v/\gamma)_j)^2},
  \]
  and therefore
  \[
    \enVert{ \nabla \phig(v)_y }_1
    =
    \frac {2\exp(v/\gamma)_y \sum_{j\neq y} \exp(v/\gamma)_j}{\gamma (\sum_j \exp(v/\gamma)_j)^2}
    \leq
    \frac 1 \gamma,
  \]
  and thus, by the mean value theorem, for any $u\in\R^k$ and $v\in\R^k$,
  there exists $z\in [u,v]$ such that
  \[
    \envert{ \phig(v)_y - \phig(u)_y }
    =
    \envert{ \ip{ \nabla \phig(z)_y }{ v - u } }
    \leq
    \|v-u\|_\infty \cdot \enVert{ \nabla \phig(v)_y }_1
    \leq
    \frac 1 {\gamma}
    \|v-u\|_\infty,
  \]
  and in particular $v\mapsto \phig(v)/y$ is $(1/\gamma)$-Lipschitz with respect to
  the $\ell_\infty$ norm.
  Applying the aforementioned Lipschitz composition rule
  \citep[Theorem 1]{dylan_multivariate_lipschitz},
  \[
    \Rad_n\del{\cbr{ (x,y)\mapsto \phig(f(x))_y : f\in\cF}}
    =
    \tcO\del{ \frac {\sqrt k} \gamma \Rad_n(\cF) }.
  \]
\end{proof}

\Cref{fact:main:fix:3} now follows by combining \Cref{fact:main:fix,fact:phig}.

\begin{proof}[Proof of \Cref{fact:main:fix:3}]
  Define $\psi := 1 - \phig$.
  The bound follows by instantiating \Cref{fact:main:fix} with $p=1$ and
  the two function
  classes
  \[
    \cQ_{\cF} := \{ (x,y) \mapsto \psi(f(x)_y) : f\in\cF \}
    \qquad\textup{and}\qquad
    \cQ_{\cG} := \{ (x,y) \mapsto \psi(g(x)_y) : g\in\cG \},
  \]
  combining its simplified Rademacher upper bounds with the estimates for
  $\Rad_m(\cQ_{\cF})$ and $\Rad_m(\cQ_{\cG})$ and $\Rad_n(\cQ_{\cG})$ from \Cref{fact:phig},
  and by using \Cref{fact:phig} to lower bound the left hand side with
  \[
    \bbE \psi(f(x))_y =
    \bbE (1 -\phig(f(x))_y) \geq \frac 1 2 \1\sbr[2]{ \argmax_{y'}f(x)_{y'}\neq y},
  \]
  and lastly noting that
  \[
    \frac 1 m \sum_{i=1}^m \|\psi(f(z_i)) - \psi(g(z_i))\|_1
    =
    \frac 1 m \sum_{i=1}^m \|1 - \phig(f(z_i)) - 1 + \phig(g(z_i))\|_1
    =
    \Phi_{\gamma,m}(f,g).
  \]
\end{proof}

To complete the proofs for \Cref{sec:abstract}, it remains to handle the data augmentation
error, namely the term $\|\nicefrac{\dif\mu_{\cX}}{\dif\nu_n}\|_\infty$.  This proof
uses the following result about Gaussian kernel density estimation.

\begin{lemma}[name={See \citep[Theorem 2 and Remark 8]{jiang-kde}}]
  \label{fact:kde}
  Suppose density $p$ is $\alpha$-H\"older continuous, meaning
  $|p(x) - p(x')| \leq C_\alpha \|x-x'\|^\alpha$ for some $C_\alpha \geq 0$
  and $\alpha\in[0,1]$.
  There there exists a constant $C \geq 0$, depending on $\alpha$, $C_\alpha$,
  $\max_{x\in\R^d} p(x)$, and the dimension, but independent of the sample size,
  so that
  with probability at least $1-1/n$,
  the Gaussian kernel density estimate with bandwidth $\sigma^2 I$ where
  $\sigma = n^{-1/(2\alpha + d)}$ satisfies
  \[
    \sup_{x\in\R^d} |p(x) - p_n(x)| \leq C \sqrt{\frac {\ln(n)}{n^{2\alpha/(2\alpha+d)}}}.
  \]
\end{lemma}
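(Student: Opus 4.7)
The plan is to follow the classical bias--variance decomposition $p_n(x) - p(x) = \del{p_n(x) - \bbE p_n(x)} + \del{\bbE p_n(x) - p(x)}$, bound each piece separately, and then balance the bandwidth $\sigma$. For the \emph{bias}, $\bbE p_n(x)$ equals the convolution $(p \ast \phi_\sigma)(x)$, where $\phi_\sigma$ denotes the Gaussian density with covariance $\sigma^2 I$, so H{\"o}lder continuity of $p$ together with symmetry of $\phi_\sigma$ gives
\[
  \envert{\bbE p_n(x) - p(x)} \leq C_\alpha \int \|y\|^\alpha \phi_\sigma(y)\dif y = \cO(\sigma^\alpha)
\]
uniformly in $x$. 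For the \emph{variance}, at any fixed $x$ the quantity $p_n(x)$ is an average of $n$ iid random variables $\phi_\sigma(x - x_i)$, each bounded by $(2\pi\sigma^2)^{-d/2} = \cO(\sigma^{-d})$, so Bernstein's inequality yields $\envert{p_n(x) - \bbE p_n(x)} = \cO(\sqrt{\ln(1/\delta)/(n\sigma^d)})$ with probability at least $1-\delta$.

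To promote the pointwise variance bound to a supremum over $\R^d$, I would cover a bounded enlargement of $\textup{supp}(p) \subseteq [0,1]^d$ by an $\eps$-net, with $\eps$ small enough that the (polynomial-in-$n$) Lipschitz constant of $\phi_\sigma$ keeps the discretization error negligible; union-bounding over the resulting polynomial-sized net introduces the $\sqrt{\ln n}$ factor. Outside the enlargement $p(x)=0$, while Gaussian tails force $p_n(x) = \cO(\sigma^{-d}\exp(-c/\sigma^2))$ deterministically in the sample, well below the target rate. Setting $\sigma^\alpha \asymp \sqrt{\ln n/(n\sigma^d)}$ yields the optimal $\sigma \asymp n^{-1/(2\alpha+d)}$, and a short calculation shows the resulting uniform error is $\cO(\sqrt{\ln n}\,n^{-\alpha/(2\alpha+d)})$, matching the statement.

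The main obstacle is taking the supremum over the unbounded domain $\R^d$: the Gaussian tails do force $p_n$ to decay rapidly outside $[0,1]^d$, but this must be quantified carefully enough that the support enlargement and the size of the $\eps$-net remain consistent with the target rate. A secondary subtlety is ensuring that the concentration penalty contributes only $\sqrt{\ln n}$ rather than a higher power, which relies on the cover having only polynomially many points (so that the Lipschitz bound on $\phi_\sigma$ is sharp enough) and on using a variance-sensitive tail inequality such as Bernstein rather than Hoeffding alone.
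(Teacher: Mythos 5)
The paper does not prove this lemma at all: it is imported verbatim from \citet[Theorem 2 and Remark 8]{jiang-kde}, so there is no internal proof to compare against. That said, your sketch is a correct and essentially complete outline of the standard argument, and it is the right argument: the bias term $|\bbE p_n(x)-p(x)|\leq C_\alpha\int\|y\|^\alpha\phi_\sigma(y)\,dy=\cO(\sigma^\alpha)$ uses exactly the global H\"older assumption; the variance term genuinely requires a variance-sensitive inequality (Bernstein with $\mathrm{Var}(\phi_\sigma(x-X))\lesssim \sigma^{-d}\|p\|_\infty$), since Hoeffding with the crude bound $\sigma^{-d}$ would give the wrong rate, and you correctly flag this; and the passage to a supremum via a polynomially-fine net over an $\cO(\sigma\sqrt{\ln n})$-enlargement of $[0,1]^d$, with deterministic Gaussian-tail control outside it, costs only the $\sqrt{\ln n}$ factor. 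The one cosmetic imprecision is in the balancing step: exactly equating $\sigma^\alpha$ with $\sqrt{\ln n/(n\sigma^d)}$ would put a logarithm inside $\sigma$, whereas the statement fixes $\sigma=n^{-1/(2\alpha+d)}$ and simply absorbs the $\sqrt{\ln n}$ into the final bound, as your last sentence of that paragraph correctly computes. The reference itself reaches the same conclusion by a slightly different technical device --- a uniform VC-type concentration bound over the family of Euclidean balls, in the style of Chaudhuri--Dasgupta, rather than an explicit $\eps$-net plus the Lipschitz constant of $\phi_\sigma$ --- which buys generality (non-Gaussian kernels, adaptive bandwidths) at the cost of more machinery; your net-based route is the more elementary and self-contained way to obtain exactly the Gaussian statement quoted here.
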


The proof of \Cref{fact:augmentation} follows.

\begin{proof}[Proof of \Cref{fact:augmentation}]
  The proposed data augmentation measure $\nu_n$ has a density $\skde$ over $[0,1]^d$,
  and it has the form
  \[
    \skde(x) = \beta + (1-\beta)p_n(x),
  \]
  where $\beta = 1/2$, and $p_n$ is the kernel density estimator as described in
  \Cref{fact:kde},  whereby
  \[
    |p_n(x) - p(x)| \leq \eps_n := \cO \del{\frac {\sqrt{\ln n}}{n^{\alpha / (2\alpha + d)}}}.
  \]
  The proof proceeds to bound $\|\nicefrac{\dif\mu_{\cX}}{\dif\nu_n}\|_\infty
  = \|\nicefrac {p}{\skde}\|_\infty$ by considering three cases.
  \begin{itemize}
    \item If $x\not\in[0,1]^d$, then $p(x)=0$ by the assumption on the support of $\mu_\cX$,
      whereas $\skde(x) \geq p_n(x)/2 >0$, thus $p(x)/\skde(x) = 0$.

    \item
      If $x\in[0,1]^d$ and $p(x) \geq 2\eps_n$, then
      $\skde(x) \geq (1-\beta) p(x) - \eps_n) \geq \eps_n/2$, and
      \begin{align*}
        \frac {p(x)}{\skde(x)}
        &=
        1 + \frac {p(x) - \skde(x)}{\skde(x)}
        \\
        &\leq
        1 + \frac {\beta p(x)}{\skde(x)} + \frac{(1-\beta)|p(x) - p_n(x)|}{\skde(x)}
        \\
        &\leq
        1
        + \frac {\beta p(x)}{(1-\beta)(p(x) - \eps_n)}
        + \frac{(1-\beta)\eps_n}{\eps_n/2}
        \\
        &\leq
        1
        + \frac {\beta}{(1-\beta)(1 - \eps_n/p(x))}
        + 1
        \\
        &\leq
        4.
      \end{align*}

    \item
      If $x\in[0,1]^d$ and $p(x) < 2\eps_n$,
      since $\skde(x) \geq \beta = 1/2$,
      then
      \[
        \frac {p(x)}{\skde(x)} < \frac {2\eps_n}{\beta} = 4\eps_n.
      \]
  \end{itemize}
  Combining these cases, $\|\nicefrac{\dif\mu_{\cX}}{\dif\nu_n}\|_\infty
  = \|\nicefrac{p}{\skde}\|_\infty
  \leq
  \max\{4, 4\eps_n\}
  \leq
  4 + 4\eps_n$.
\end{proof}

\section{Replacing softmax with standard margin (ramp) loss}
\label{sec:margins}

The proof of \Cref{fact:main:fix:3} was mostly a reduction to
\Cref{fact:main:fix}, which mainly needs bounded functions; for the Rademacher complexity
estimates, the Lipschitz property of $\phig$ was used.
As such, the softmax can be replaced with
the $(1/\gamma)$-Lipschitz ramp loss as is standard from margin-based generalization
theory (e.g., in a multiclass version as appears in \citep{spec}).
Specifically, define $\cM_{\gamma} :\R^k \to [0,1]^k$ for any coordinate $j$ as
\[
  \cM_{\gamma}(v)_j := \ell_{\gamma}( v_j - \argmax_{y'\neq j} v_{y'}),
  \qquad\textup{where }
  \ell_{\gamma}(z) := \begin{cases}
    1 &z \leq 0,
    \\
    1 - \frac {z}{\gamma}
      &z\in  (0,\gamma),
      \\
    0 &z\geq \gamma.
  \end{cases}
\]
We now have $\1[\argmax_{y'} f(x)_{y'}] \leq \cM_{\gamma}(f(x))_y$ without a factor
of $2$ as in \Cref{fact:phig}, and can plug it into the general lemma in \Cref{fact:main:fix}
to obtain the following corollary.

\begin{corollary}
  Let temperature (margin!) parameter $\gamma > 0$ be given,
  along with sets of multiclass predictors $\cF$ and $\cG$.
  Then with probability at least $1-2\delta$ over an iid draw of data $((x_i,y_i))_{i=1}^n$ from
  $\mu$ and $(z_i)_{i=1}^n$ from $\nu_n$,
  every $f\in\cF$ and $g\in\cG$ satisfy
  \begin{align*}
    \Pr[\argmax_{y'} f(x)_{y'} \neq y]
    &\leq
      \enVert{ \frac {\dif \mu_{\cX}}{\dif \nu_n} }_{\infty}
      \frac 1 m \sum_{i=1}^m \|\cM_\gamma(f) - \cM_\gamma(g)\|_1
      +
      \frac 1 n \sum_{i=1}^n \cM_\gamma(g(x_i))_{y_i}
      \\
    &
      +
      \tcO\del[3]{
        \frac {k^{3/2}}{\gamma}
      \enVert{ \frac {\dif \mu_{\cX}}{\dif \nu_n} }_{\infty}
      \del{ \Rad_m(\cF) + \Rad_m(\cG)}
    + \frac {\sqrt k}{\gamma}\Rad_n(\cG) }
      \\
      &
      +
    3 \sqrt{\frac{\ln(1/\delta)}{2n}}
      \del{ 1 + 
          \enVert{ \frac {\dif \mu_{\cX}}{\dif \nu_n} }_{\infty}
          \sqrt{\frac n m }
      }.
    \end{align*}
\end{corollary}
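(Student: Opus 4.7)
The plan is to instantiate \Cref{fact:main:fix} with the conjugate exponents $(p,q) = (1,\infty)$ and the transformed classes
\[
  \cQ_\cF := \{(x,y) \mapsto \cM_\gamma(f(x))_y : f \in \cF\},
  \qquad
  \cQ_\cG := \{(x,y) \mapsto \cM_\gamma(g(x))_y : g \in \cG\},
\]
each of which is $[0,1]^k$-valued since $\ell_\gamma \in [0,1]$. This mirrors the proof of \Cref{fact:main:fix:3}, with $\cM_\gamma$ playing the role previously played by $1 - \phi_\gamma$.

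The first step is the pointwise inequality $\1[\argmax_{y'} f(x)_{y'} \neq y] \leq \cM_\gamma(f(x))_y$: if $y = \argmax_{y'} f(x)_{y'}$ the right side is nonnegative, and otherwise $f(x)_y - \max_{y' \neq y} f(x)_{y'} \leq 0$ forces $\ell_\gamma$ to evaluate to $1$. Unlike \Cref{fact:phig}, this introduces \emph{no} factor of $2$, which is precisely why the leading constants in the present statement are half of those in \Cref{fact:main:fix:3}. Plugging $\cQ_\cF$ and $\cQ_\cG$ into \Cref{fact:main:fix} with $p = 1$ then delivers the empirical terms $\tfrac{1}{m} \sum_i \|\cM_\gamma(f(z_i)) - \cM_\gamma(g(z_i))\|_1$ and $\tfrac{1}{n} \sum_i \cM_\gamma(g(x_i))_{y_i}$ at the stated scaling, together with the $\|\dif\mu_\cX/\dif\nu_n\|_\infty$ prefactor (since $q = \infty$) and the two $\sqrt{\ln(1/\delta)/n}$ concentration terms.

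The remaining work is to translate the Rademacher complexities of $\cQ_\cF$ and $\cQ_\cG$ back into those of $\cF$ and $\cG$, exactly as in the second half of \Cref{fact:phig}. For each $y \in [k]$, the coordinate map $v \mapsto \cM_\gamma(v)_y = \ell_\gamma(v_y - \max_{y' \neq y} v_{y'})$ is $(2/\gamma)$-Lipschitz with respect to $\|\cdot\|_\infty$, since $\ell_\gamma$ is $(1/\gamma)$-Lipschitz and the margin map $v \mapsto v_y - \max_{y' \neq y} v_{y'}$ is $2$-Lipschitz in $\|\cdot\|_\infty$. Invoking the multivariate Lipschitz composition lemma of \citet{dylan_multivariate_lipschitz} then yields $\Rad_n(\cQ_\cG) = \tcO((\sqrt{k}/\gamma)\, \Rad_n(\cG))$, and applying it coordinatewise inside the sum-over-coordinates Rademacher estimate supplied by \Cref{fact:main:fix} produces the claimed $\tcO((k^{3/2}/\gamma)(\Rad_m(\cF) + \Rad_m(\cG)))$ term.

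The only substantive computation is the $\ell_\infty$-Lipschitz calculation for $\cM_\gamma$; once that is in hand, the rest is a direct substitution into \Cref{fact:main:fix} entirely parallel to the existing proof of \Cref{fact:main:fix:3}, and no new concentration machinery is required.
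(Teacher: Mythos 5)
Your proposal is correct and follows essentially the same route as the paper: replace the softmax with $\cM_\gamma$, use $\1[\argmax_{y'} f(x)_{y'}\neq y]\leq \cM_\gamma(f(x))_y$ (no factor of $2$), establish the $(2/\gamma)$-Lipschitz property of $\cM_\gamma$ in $\ell_\infty$, and feed the composed classes into the abstract lemma together with the multivariate Lipschitz composition rule. If anything you are slightly more careful than the paper's write-up, which nominally cites \Cref{fact:main:fix:3} where the constants make clear it must mean \Cref{fact:main:fix}, exactly the instantiation you carry out.
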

\begin{proof}
  Overload function composition notation to sets of functions, meaning
  \[
    \cM_{\gamma}\circ \cF = \cbr{
      (x,y) \mapsto \cM_\gamma(f(x))_y : f\in\cF
    }.
  \]
  First note that $\cM_\gamma$ is $(2/\gamma)$-Lipschitz with respect to the $\ell_{\infty}$
  norm, and thus, applying the multivariate Lipschitz composition lemma for
  Rademacher complexity \citep[Theorem 1]{dylan_multivariate_lipschitz} just as in the proof
  for the softmax in \Cref{fact:phig},
  \[
    \Rad_m(\cM_{\gamma}\circ \cF) = \tcO\del{ \frac {2\sqrt{k}}{\gamma} \Rad_m(\cF) },
  \]
  with similar bounds for $\Rad_m(\cM_\gamma\circ \cG)$ and $\Rad_n(\cM_\gamma\circ \cG)$.
  The desired statement now follows by combining these Rademacher complexity bounds
  with \Cref{fact:main:fix:3} applied to $\cM_\gamma\circ\cF$ and $\cM_\gamma \circ \cG$,
  and additionally using $\1[\argmax_{y'} f(x)_{y'}\neq y] \leq \cM_{\gamma}(f(x))_y$.
\end{proof}

\section{Sampling tools}

The proofs of \Cref{fact:snb} and \Cref{fact:sparsify:2} both make heavy use of sampling.

\begin{lemma}[Maurey \citep{pisier1980remarques}]
  \label{fact:maurey}
  Suppose random variable $V$ is almost surely
  supported on a subset $S$ of some Hilbert space,
  and let $(V_1,\ldots,V_k)$ be $k$ iid copies of $V$.  Then
  there exist $(\hV_1,\ldots,\hV_k) \in S^k$ with
  \begin{align*}
    \enVert{ \bbE V - \frac 1 k \sum_i \hV_i }_\tF^2
    \leq
    \E_{V_1,\ldots,V_k}
    \enVert{ \bbE V - \frac 1 k \sum_i V_i }_\tF^2
    =
    \frac 1 {k}
    \sbr{
      \bbE \|V\|_\tF^2 - \|\bbE V\|_\tF^2
    }
    \leq
    \frac 1 k \bbE \|V\|^2_\tF
    \leq
    \frac 1 k \sup_{\hV\in S} \|\hV\|^2_\tF.
  \end{align*}
\end{lemma}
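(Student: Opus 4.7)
The plan is to prove the chain of (in)equalities from right to left in terms of difficulty: establish the variance identity in the middle, then derive the two trivial upper bounds, and finally invoke the probabilistic method to produce the deterministic $(\hV_1,\ldots,\hV_k)$.

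The heart of the argument is the identity $\E\|\E V - \frac{1}{k}\sum_i V_i\|_\tF^2 = \frac{1}{k}(\E\|V\|_\tF^2 - \|\E V\|_\tF^2)$. I would expand
\[
\enVert{\E V - \tfrac{1}{k}\sum_i V_i}_\tF^2 = \tfrac{1}{k^2}\sum_{i,j}\ip{\E V - V_i}{\E V - V_j}_\tF,
\]
take expectations, and use independence: for $i\neq j$, the $V_i$ and $V_j$ are independent with common mean $\E V$, so $\E\ip{\E V - V_i}{\E V - V_j}_\tF = \ip{\E V - \E V_i}{\E V - \E V_j}_\tF = 0$. The $k$ diagonal terms each contribute $\E\|V\|_\tF^2 - 2\ip{\E V}{\E V}_\tF + \|\E V\|_\tF^2 = \E\|V\|_\tF^2 - \|\E V\|_\tF^2$, giving the claimed identity after dividing by $k^2$ and multiplying by $k$.

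The two subsequent upper bounds are immediate: $\E\|V\|_\tF^2 - \|\E V\|_\tF^2 \leq \E\|V\|_\tF^2$ by nonnegativity of the norm squared, and $\E\|V\|_\tF^2 \leq \sup_{\hV\in S}\|\hV\|_\tF^2$ since $V\in S$ almost surely so the pointwise bound holds inside the expectation.

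Finally, for the leftmost inequality, I would invoke the probabilistic method: since the expectation of the random quantity $\|\E V - \frac{1}{k}\sum_i V_i\|_\tF^2$ over the iid sample is bounded by the middle expression, there must \emph{exist} a realization $(\hV_1,\ldots,\hV_k)\in S^k$ whose value lies below that expectation (every random variable is at most its mean with positive probability). I do not anticipate a hard step; the only subtlety is being careful that $S$ sits inside a Hilbert space so that the inner product and norm manipulations are legitimate (in particular, that the Frobenius norm truly comes from an inner product on the ambient space), but this is built into the hypothesis.
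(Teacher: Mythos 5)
Your proposal is correct and follows essentially the same route as the paper's proof: expand the square, use independence to kill the cross terms and reduce the diagonal terms to the variance $\bbE\|V\|_\tF^2 - \|\bbE V\|_\tF^2$, bound trivially, and invoke the probabilistic method for the existence of $(\hV_1,\ldots,\hV_k)\in S^k$. No gaps.
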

\begin{proof}[Proof of \Cref{fact:maurey}]
  The first inequality is via the probabilistic method.
  For the remaining inequalities, by expanding the square multiple times,
  \begin{align*}
    \E_{V_1,\ldots,V_k}
    \enVert{ \bbE V - \frac 1 k \sum_i V_i }_\tF^2
    &\leq
    \E_{V_1,\ldots,V_k}
    \frac 1 {k^2}\sbr{
      \sum_i \enVert{ \bbE V - V_i }_\tF^2
      +
      \sum_{i\neq j} \ip{\bbE V - V_i}{\bbE V - V_j}
    }
    \\
    &=
    \frac 1 {k}
    \bbE_{V_1}
    \enVert{
      V_1 - \bbE V
    }_\tF^2
=
    \frac 1 {k}
    \sbr{
      \bbE \|V\|_\tF^2 - \|\bbE V\|_\tF^2
    }
    \leq
    \frac 1 k \bbE \|V\|^2_\tF
    \leq
    \frac 1 k \sup_{\hV\in S} \|\hV\|^2_\tF.
  \end{align*}
\end{proof}

A first key application of \Cref{fact:maurey} is to sparsify products,
as used in \Cref{fact:sparsify:2}.

\begin{lemma}
  \label{fact:maurey:matrix_mult}
  Let matrices $A \in \R^{d\times m}$
  and $B \in \R^{n\times m}$ be given,
  along with sampling budget $k$.
  Then there exists a selection $(i_1,\ldots,i_k)$ of indices and a corresponding
  diagonal \emph{sampling matrix} $M$ with at most $k$ nonzero entries satisfying
  \[
    M := \frac {\|A\|_\tF^2} k \sum_{j=1}^k \frac{\be_{i_j} \be_{i_j}^\T}{\|A\be_{i_j}\|^{2}}
    \qquad\text{and}\qquad
    \enVert{
      AB^\T - A M B^\T
    }^2
    \leq
    \frac 1 {k}
    \|A\|^2\|B\|^2.
  \]
\end{lemma}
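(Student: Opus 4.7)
The plan is to set up an importance-sampled outer product decomposition of $AB^\T$ and then invoke Maurey (\Cref{fact:maurey}) directly. Start from the identity
\[
  AB^\T = \sum_{i=1}^m (A\be_i)(B\be_i)^\T,
\]
and define a random rank-one matrix $V$ taking the value
\[
  V = \frac{\|A\|_\tF^2}{\|A\be_i\|^2}\,(A\be_i)(B\be_i)^\T
  \qquad\text{with probability } p_i = \frac{\|A\be_i\|^2}{\|A\|_\tF^2},
\]
for each index $i\in[m]$ with $\|A\be_i\|>0$ (indices with $\|A\be_i\|=0$ contribute nothing to the sum, so they can be ignored). A direct computation gives $\E V = \sum_i (A\be_i)(B\be_i)^\T = AB^\T$ as required for Maurey.

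Next I would bound the second moment. Since $\|(A\be_i)(B\be_i)^\T\|_\tF = \|A\be_i\|\,\|B\be_i\|$, a termwise calculation yields
\[
  \E\|V\|_\tF^2
  = \sum_{i}\frac{\|A\be_i\|^2}{\|A\|_\tF^2}\cdot\frac{\|A\|_\tF^4}{\|A\be_i\|^4}\,\|A\be_i\|^2\|B\be_i\|^2
  = \|A\|_\tF^2 \sum_i\|B\be_i\|^2
  = \|A\|_\tF^2\,\|B\|_\tF^2.
\]
Drawing iid copies $V_1,\ldots,V_k$ and applying \Cref{fact:maurey}, there exist realizations (i.e., a specific sequence of indices $(i_1,\ldots,i_k)$) such that
\[
  \enVert{AB^\T - \tfrac{1}{k}\sum_{j=1}^k V_j}_\tF^2
  \ \leq\ \frac{1}{k}\,\E\|V\|_\tF^2
  \ \leq\ \frac{1}{k}\,\|A\|_\tF^2\,\|B\|_\tF^2.
\]

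Finally, I would rewrite the empirical average as $A M B^\T$. Using $A\be_{i_j}\be_{i_j}^\T B^\T = (A\be_{i_j})(B\be_{i_j})^\T$,
\[
  \frac{1}{k}\sum_{j=1}^k V_j
  = A\,\sbr[3]{\frac{\|A\|_\tF^2}{k}\sum_{j=1}^k \frac{\be_{i_j}\be_{i_j}^\T}{\|A\be_{i_j}\|^2}}B^\T
  = A M B^\T,
\]
which matches the stated form of $M$ (clearly diagonal with at most $k$ nonzero entries), completing the proof. The only subtlety worth being careful about is the handling of columns with $\|A\be_i\|=0$ in the importance weights, but since such columns contribute zero outer products there is no genuine obstacle; everything else is a direct book-keeping calculation on top of \Cref{fact:maurey}.
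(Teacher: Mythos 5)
Your proposal is correct and follows essentially the same route as the paper: the identical importance-sampling distribution $p_i = \|A\be_i\|^2/\|A\|_\tF^2$ on the rank-one terms $(A\be_i)(B\be_i)^\T$, the same second-moment computation yielding $\|A\|_\tF^2\|B\|_\tF^2$, and the same invocation of \Cref{fact:maurey} followed by factoring the empirical average into $AMB^\T$. Your explicit handling of zero columns is a minor bookkeeping point the paper's proof leaves implicit.
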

\begin{proof}[Proof of \Cref{fact:maurey:matrix_mult}]
  For convenience, define columns $a_i := A\be_i$ and $b_i:=B\be_i$ for $i\in\{1,\ldots,m\}$.
  Define \emph{importance weighting} $\beta_i := (\nicefrac {\|a_i\|}{\|A\|_\tF})^2$,
  whereby $\sum_i \beta_i = 1$,
  and let $V$ be a random variable with
  \[
    \Pr\sbr{
      V = \beta_i^{-1} a_i b_i^\T
    } = \beta_i,
  \]
  whereby
  \begin{align*}
    \bbE V
    &= \sum_{i=1}^m \beta_i^{-1} a_i b_i^\T \beta_i
    = \sum_{i=1}^m (A\bfe_i) (B\bfe_i)^\T
    = A \sbr[3]{ \sum_{i=1}^m \bfe_i \bfe_i^\T } B^\T
    = A \sbr{ I } B^\T
    = AB,
    \\
    \bbE \|V\|^2
    &= \sum_{i=1}^m \beta_i^{-2} \|a_i b_i^\T\|_\tF^2 \beta_i
    = \sum_{i=1}^m \beta_i^{-1} \|a_i\|^2 \| b_i\|^2
    = \sum_{i=1}^m \|A\|^2_\tF \| b_i \|^2
    = \|A\|_\tF^2 \cdot \|B\|_\tF^2.
  \end{align*}
  By \Cref{fact:maurey}, there exist indices $(i_1,\ldots,i_k)$
  and matrices $\hV_j := \beta_{i_j}^{-1} a_{i_j} b_{i_j}^\T$ with
  \[
    \enVert{
      AB^\T - \frac 1 k \sum_j \hV_j
    }^2
    \leq
    \enVert{
      \bbE V - \frac 1 k \sum_j \hV_j
    }^2
    =
    \frac 1 {k}
    \sbr{
      \|A\|_\tF^2 \|B\|_\tF^2 - \|AB\|_\tF^2
    }
    \leq
    \frac 1 {k}
    \|A\|_\tF^2
    \|B\|_\tF^2.
  \]
  To finish, by the definition of $M$,
  \[
    \frac 1 k \sum_j \hV_j
    = \frac 1 k \sum_j \beta_{i_j}^{-1} (A\bfe_{i_j}) (B\bfe_{i_j})^\T
    = A \sbr{ \frac 1 k \sum_j \beta_{i_j}^{-1} \bfe_{i_j} \bfe_{i_j} ^\T } B^\T
    = A \sbr{ M  } B^\T.
  \]
\end{proof}

A second is to cover the set of matrices $W$ satisfying a norm bound $\|W^\T\|_{2,1}\leq r$.
The proof here is more succinct and explicit than the one in \citep[Lemma 3.2]{spec}.

\begin{lemma}[name={See also \citep[Lemma 3.2]{spec}}]
  \label{fact:21}
  Let norm bound $r\geq 0$, $X\in\R^{n\times d}$, and integer $k$ be given.
  Define a family of matrices
  \[
    \cM := \cbr{ \frac {r\|X\|_\tF} k
      \sum_{l=1}^k \frac {s_l\bfe_{i_l}\bfe_{j_l}^\T}{\|X\bfe_{j_l}\|}
    : s_l\in\{\pm 1\}, i_l \in \{1,\ldots, n\}, j_l \in  \{1,\ldots,d\}}.
  \]
  Then
  \[
    |\cM| \leq (2nd)^k,
    \qquad
    \sup_{\|W^\T\|_{2,1}\leq r} \min_{\hW\in\cM}
    \|WX^\T - \hW{}X^\T \|_\tF^2 \leq \frac {r^2\|X\|_\tF^2}{k}.
  \]
\end{lemma}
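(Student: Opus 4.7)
The plan is to apply Maurey's lemma (\Cref{fact:maurey}) to a random rank-one matrix whose support consists exactly of the scaled atoms $\tfrac{r\|X\|_\tF}{\|X\bfe_j\|}\,s\,\bfe_i(X\bfe_j)^\T$ indexed by $(s,i,j)\in\{\pm 1\}\times[n]\times[d]$, so that the average returned by Maurey is automatically of the form $\hat W X^\T$ for some $\hat W\in\cM$. The cardinality bound $|\cM|\le(2nd)^k$ is then immediate from the $2nd$ possible tuples per index $l$.

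First I would set up an importance-sampling distribution realising $\bbE[V]=WX^\T$. Writing $WX^\T=\sum_{i,j}W_{ij}\bfe_i(X\bfe_j)^\T$ and letting $w_i^\T$ denote row $i$ of $W$, I would take
\[
  P(s,i,j) := \tfrac12\Bigl(a_{ij} + s\cdot\tfrac{W_{ij}\|X\bfe_j\|}{r\|X\|_\tF}\Bigr),
  \qquad
  V := \tfrac{r\|X\|_\tF\,s}{\|X\bfe_j\|}\,\bfe_i(X\bfe_j)^\T,
\]
with auxiliary masses $a_{ij}\ge 0$ chosen to both dominate $|W_{ij}|\|X\bfe_j\|/(r\|X\|_\tF)$ (so $P\ge 0$) and sum to $1$ (so $P$ is a probability). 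The key inequality making this feasible is
\[
  \sum_{i,j}|W_{ij}|\,\|X\bfe_j\|\;\le\;\sum_i\|w_i\|\cdot\|X\|_\tF\;=\;\|W^\T\|_{2,1}\,\|X\|_\tF\;\le\;r\,\|X\|_\tF,
\]
obtained by Cauchy--Schwarz row-by-row (using $\sum_j\|X\bfe_j\|^2=\|X\|_\tF^2$) together with the hypothesis $\|W^\T\|_{2,1}\le r$; this bounds the mandatory mass by $1$ and leaves slack that can be absorbed uniformly into the $a_{ij}$. A direct computation $\sum_s s\,P(s,i,j)=W_{ij}\|X\bfe_j\|/(r\|X\|_\tF)$ then yields $\bbE[V]=WX^\T$.

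Next I would apply Maurey. Since $\|\bfe_i(X\bfe_j)^\T\|_\tF=\|X\bfe_j\|$, every atom has Frobenius norm exactly $r\|X\|_\tF$, so $\|V\|_\tF=r\|X\|_\tF$ almost surely. \Cref{fact:maurey} then returns $\hat V_1,\ldots,\hat V_k$ in the support with $\|WX^\T - \tfrac1k\sum_l\hat V_l\|_\tF^2 \le r^2\|X\|_\tF^2/k$. Each $\hat V_l$ records a tuple $(s_l,i_l,j_l)$, and these tuples assemble into an element $\hat W\in\cM$ with $\hat W X^\T=\tfrac1k\sum_l\hat V_l$, yielding the claimed error bound.

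The main obstacle --- small but worth naming --- is that each element of $\cM$ is a sum of exactly $k$ \emph{nonzero} scaled atoms, so the support of $V$ must not contain the zero matrix; the uniform padding of $a_{ij}$ is precisely what avoids placing mass on a ``zero atom'' (which would produce a Maurey output with fewer than $k$ summands and thus not lie in $\cM$) while still matching the target expectation. Beyond that, the argument is a direct instantiation of Maurey, and the counting of $|\cM|$ is transparent.
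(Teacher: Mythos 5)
Your proposal is correct and is essentially the paper's own argument: decompose $WX^\T$ into rank-one atoms $\bfe_i(X\bfe_j)^\T$ normalized to have constant Frobenius norm $r\|X\|_\tF$, use the row-wise Cauchy--Schwarz bound $\sum_{i,j}|W_{ij}|\|X\bfe_j\|\le \|W^\T\|_{2,1}\|X\|_\tF$ to build an importance-sampling probability, and invoke \Cref{fact:maurey}. Your explicit symmetric split $P(s,i,j)=\tfrac12(a_{ij}+s\,c_{ij})$ is just a cleaner bookkeeping of the paper's padding step (``adding in, with equal weight, some $U_{ij}$ and its negation'') and changes nothing substantive.
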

\begin{proof}
  Let $W\in\R^{m\times d}$ be given with $\|W^\T\|_{2,1} \leq r$.
  Define $s_{ij} := W_{ij}/|W_{ij}|$, and note
  \begin{align*}
    WX^\T
    = \sum_{i,j} \bfe_i\bfe_i^\T W \bfe_j\bfe_j^\T X^\T
    = \sum_{i,j} \bfe_i W_{ij} (X\bfe_j)^\T
    = \sum_{i,j} \underbrace{\frac{|W_{ij}| \|X\bfe_j\|_2}{r\|X\|_\tF}}_{=:q_{ij}}
    \underbrace{\frac{r \|X\|_\tF s_{ij}\bfe_i (X\bfe_j)^\T }{\|X\bfe_j\|}}_{=:U_{ij}}.
  \end{align*}
  Note by Cauchy-Schwarz that
  \[
    \sum_{i,j} q_{ij}
    \leq \frac {1}{r\|X\|_\tF} \sum_i \sqrt{\sum_j W_{ij}^2}\|X\|_\tF
    = \frac {\|W^\T\|_{2,1} \|X\|_\tF}{r\|X\|_\tF} \leq 1,
  \]
  potentially with strict inequality, thus $q$ is not a probability vector.
  To remedy this, construct probability vector $p$ from $q$ by adding in, with equal weight,
  some $U_{ij}$ and its negation, so that the above summation form of $WX^\T$ goes through
  equally with $p$ and with $q$.

  Now define iid random variables $(V_1,\ldots,V_k)$, where
  \begin{align*}
    \Pr[V_l = U_{ij}] &= p_{ij},
    \\
    \bbE V_l &= \sum_{i,j} p_{ij} U_{ij} = \sum_{i,j} q_{ij} U_{ij} = WX^\T,
    \\
    \|U_{ij}\| 
             &= \enVert{\frac {s_{ij} \bfe_i (X\bfe_j)}{\|X\bfe_j\|_2} }_\tF \cdot r \|X\|_\tF
             = |s_{ij}|\cdot\|\bfe_i\|_2 \cdot\enVert{\frac{X\bfe_j}{\|X\bfe_j\|_2} }_2
             \cdot r \|X\|_\tF
             =
             r \|X\|_\tF,
             \\
    \bbE \|V_l\|^2
             &= \sum_{i,j} p_{ij} \|U_{ij}\|^2
             \leq \sum_{ij} p_{ij} r^2 \|X\|_\tF^2
             = r^2 \|X\|_\tF^2.
  \end{align*}
  By \Cref{fact:maurey}, there exist $(\hV_1,\ldots,\hV_k)\in S^k$ with
  \[
    \enVert{ WX^\T - \frac 1 k \sum_l \hV_l }^2
    \leq
    \bbE \enVert{ \bbE V_1  - \frac 1 k \sum_l V_l }^2
    \leq
    \frac 1 k
    \bbE \|V_1\|^2
    \leq
    \frac {r^2 \|X\|_\tF^2} k.
  \]
  Furthermore, the matrices $\hV_l$ have the form
  \[
    \frac 1 k \sum_l \hV_l
    = \frac 1 k \sum_l \frac {s_l\be_{i_l} (X\bfe_{j_l})^\T}{\|X\bfe_{j_l}\|}
    = \sbr{ \frac 1 k \sum_l \frac {s_l\be_{i_l} \bfe_{j_l}^\T}{\|X\bfe_{j_l}\|} } X^\T
    =: \hW X^\T,
  \]
  where $\hW\in\cM$.
  Lastly, note $|\cM|$ has cardinality at most $(2nd)^k$.
\end{proof}

\section{Proofs for \Cref{sec:concrete}}

The bulk of this proof is devoted to establishing the Rademacher bound for computation
graphs in \Cref{fact:snb}; thereafter, as mentioned in \Cref{sec:analysis},
it suffices to plug this bound and the data augmentation bound in \Cref{fact:augmentation}
into \Cref{fact:main:fix:3}, and apply a pile of union bounds.

As mentioned in \Cref{sec:analysis}, this 
proof follows the scheme laid out in \citep{spec},
with simplifications due to the removal of ``reference matrices'' and some norm generality.

\begin{proof}[Proof of \Cref{fact:snb}]
  Let cover scale $\eps$ and per-layer scales $(\eps_1,\ldots,\eps_L)$ be given;
  the proof will develop a covering number parameterized by these per-layer scales,
  and then optimize them to derive the final covering number in terms of $\eps$.
  From there, a Dudley integral will give the Rademacher bound.

  Define $\tb_i := b_i\sqrt{n}$ for convenience.
  As in the statement, recursively define
  \[
    X_0^\T := X^\T,
    \qquad
    X_i^\T := \sigma_i\del{ [ W_i \Pi_i D_i \mc F_i ] X_{i-1}^\T }.
  \]
  The proof will recursively construct an analogous cover via
  \[
    \hX_0^\T := X^\T,
    \qquad
    \hX_i^\T := \sigma_i\del{ [ \hW_i \Pi_i D_i \mc F_i ] \hX_{i-1}^\T },
  \]
  where the choice of $\hW_i$ depends on $\hX_{i-1}$, and thus the total cover
  cardinality will product (and not simply sum) across layers.  Specifically,
  the cover $\cN_i$ for $\hW_i$ is given by \Cref{fact:21} by plugging in
  $\|\Pi_i D_i \hX_{i-1}^\T\|_\tF \leq \tb_i$,
  and thus it suffices to choose
  \[
    \textup{cover cardinality } k := \frac {r_i^2 \tb_i^2}{\eps_i^2},
    \qquad
    \textup{whereby }
    \min_{\hW_i \in\cN_i} \|W_i \Pi_i D_i \hX_{i-1}^\T - \hW_i \Pi_i D_i \hX_{i-1}^\T\|
    \leq \eps_i.
  \]
  By this choice (and the cardinality estimate in \Cref{fact:21},
  the full cover $\cN$ satisfies
  \[
    \ln|\cN|
    = \sum_i \ln |\cN_i|
    \leq
    \sum_i \frac {r_i^2 \tb_i^2}{\eps_i^2} \ln(2m^2).
  \]

  To optimize the parameters $(\eps_1,\ldots,\eps_L)$, the first step is to show via
  induction that
  \[
    \|X_i^\T - \hX_i^\T\|_\tF \leq \sum_{j \leq i} \eps_j \rho_j \prod_{l=j+1}^i s_l \rho_l.
  \]
  The base case is simply $\|X_0^\T - \hX^\T\| = \|X^\T - X^\T\| = 0$, thus consider
  layer $i>0$.
  Using the inductive formula for $\hX_i$ and the cover guarantee on $\hW_i$,
  \begin{align*}
    \enVert{ X_i^\T - \hX_i^\T }
    &=
    \enVert{ \sigma_i([ W_i \Pi_i D_i \mc F_i ] X_{i-1}^\T)
    - \sigma_i([\hW_i \Pi_i D_i \mc F_i] \hX_{i-1}^\T) }
    \\
    &\leq
    \rho_i\enVert{ [ W_i \Pi_i D_i \mc F_i ] hX_{i-1}^\T
    - [\hW_i \Pi_i D_i \mc F_i] \hX_{i-1}^\T }
    \\
    &\leq
    \rho_i\enVert{ [ W_i \Pi_i D_i \mc F_i ] X_{i-1}^\T
    - [W_i \Pi_i D_i \mc F_i] \hX_{i-1}^\T }
    +
    \rho_i\enVert{ [ W_i \Pi_i D_i \mc F_i ] \hX_{i-1}^\T
    - [\hW_i \Pi_i D_i \mc F_i] \hX_{i-1}^\T }
    \\
    &\leq
    \rho_i\enVert{ [ W_i \Pi_i D_i \mc F_i ] }_2
    \enVert{ X_{i-1}^\T - \hX_{i-1}^\T }
    +
    \rho_i\enVert{ [ (W_i - \hW_i) \Pi_i D_i \hX_{i-1}^\T \mc (F_i - F_i)\hX_{i-1}^\T ] }
    \\
    &\leq
    s_i
    \rho_i\sum_{j \leq i-1} \eps_j \rho_j \prod_{l=j+1}^{i-1} s_l \rho_l
    +
    \rho_i\enVert{(W_i - \hW_i) \Pi_i D_i \hX_{i-1}^\T}
    \\
    &\leq
    \sum_{j \leq i-1} \eps_j\rho_j \prod_{l=j+1}^{i} s_l \rho_l
    +
    \rho_i \eps_i
    \leq
    \sum_{j \leq i} \eps_j \rho_j \prod_{l=j+1}^i s_l \rho_l.
  \end{align*}
  To balance $(\eps_1,\ldots,\eps_L)$, it suffices to minimize a Lagrangian corresponding
  to the cover size subject to an error constraint, meaning
  \[
    L(\vec\eps, \lambda) = \sum_{i=1}^L \frac {\alpha_i}{\eps_i^2}
    + \lambda \del{ \sum_{i=1}^L \eps_i \beta_i - \eps}
    \qquad\textup{where }
    \alpha_i := r_i^2 \tb_i^2 \ln(2m^2),
    \quad
    \beta_i := \rho_i \prod_{l = i+1}^L s_l \rho_l,
  \]
  whose unique critical point for $\vec\eps>0$ implies the choice
  \[
    \eps_i := \frac 1 Z \del{ \frac{2\alpha_i}{\beta_i} }^{1/3}
    \qquad\textup{where }
    Z := \frac 1 {\eps} \sum_i (2\alpha_i\beta_i^2)^{1/3},
  \]
  whereby $\|X_L^\T - \hX_L^\T \|\leq \eps$ automatically, and
  \begin{align*}
    \ln|\cN|
    &\leq Z^2 \sum_i \frac {r_i^2 \tb_i^2\ln(2m^2)}{(2\alpha_i/\beta_i)^{2/3}}
    \\
    &=
    \frac 1 {\eps^2 2^{2/3}} \sbr{ 2 \sum_i r_i^{2/3} \tb_i^{2/3} \beta_i^{2/3} \ln(2m^2)^{1/3} }^2
    \sum_i r_i^{2/3} \tb_i^{2/3}  \ln(2m^2)^{1/3} \beta_i^{2/3}
    \\
    &=
    \frac {2^{4/3} \ln(2m^2)} {\eps^2}
    \sbr[4]{ \sum_i \del[3]{r_i \tb_i \rho_i \prod_{l=i+1}^L s_l\rho_l}^{2/3} }^3
    =:
    \frac {\tau^2}{\eps^2},
  \end{align*}
  as desired, with $\tau$ introduced for convenience in what is to come.

  For the Rademacher complexity estimate, by a standard Dudley entropy integral
  \citep{shai_shai_book}, setting $\hat\tau := \max\{\tau, 1/3\}$ for convenience,
  \[
    n\Rad(\cG)
    \leq \inf_{\zeta} 4 \zeta \sqrt{n} + 12 \int_{\zeta}^{\sqrt n} \frac {\sqrt{\hat\tau}{\eps}}\dif\eps
    = \inf_{\zeta} 4 \zeta \sqrt{n} + 12 \hat\tau \left. \ln(\eps)\right|_\zeta^{\sqrt n}
      = \inf_{\zeta} 4 \zeta \sqrt{n} + 12 \hat\tau (\ln \sqrt{n} - \ln \zeta),
  \]
  which is minimized at $\zeta = 3\hat\tau /\sqrt{n}$, whereby
  \[
    n\Rad(\cG)
    \leq 12 \hat\tau + 6\hat\tau\ln n - 12\hat\tau \ln(3\hat\tau / \sqrt{n})
    = 12\hat\tau (1 - \ln(3\hat\tau))
    \leq 12 \hat \tau
    \leq 12 \tau + 4.
  \]
\end{proof}

This now gives the proof of \Cref{fact:pollard:snb}.

\begin{proof}[Proof of \Cref{fact:pollard:snb}]
  With \Cref{fact:main:fix:3}, \Cref{fact:augmentation}, and \Cref{fact:snb}
  out of the way, the main work of this
  proof is to have an infimum over distillation network hyperparameters
  $(\vec b, \vec r, \vec s)$ on the right hand side,
  which is accomplished by dividing these
  hyperparameters into countably many shells, and unioning over them.

  In more detail, divide $(\vec b, \vec r, \vec s)$ into shells as follows.
  Divide each $b_i$ and $r_i$ into shells of radius increasing by one, meaning
  meaning for example the first shell for $b_i$ has $b_i \leq 1$, and the $j$th shell
  has $b_i \in (j-1, j]$, and similarly for $r_i$; moreover, associate the $j$th shell
  with prior weight $q_j(b_i) := (j(j+1))^{-1}$, whereby $\sum_{j\geq 1} q_j(b_i) = 1$.
  Meanwhile, for $s_i$ use a finer grid where the first shell has $s_i \leq 1/L$, and the $j$th
  shell has $s_i \in ((j-1)/L, j/L)$, and again the prior weight is $q_j(s_i) = (j(j+1))^{-1}$.
  Lastly, given a full set of grid parameters $(\vec b, \vec r, \vec s)$, associate prior
  weight $q(\vec b, \vec r, \vec s)$ equal to the product of the individual prior weight, whereby
  the sum of the prior weights over the entire product grid is $1$.  Enumerate this grid in
  any way, and define failure probability
  $\delta(\vec b, \vec r, \vec s) := \delta \cdot q(\vec b, \vec r, \vec s)$.

  Next consider some fixed grid shell with parameters $(\vec b', \vec r', \vec s')$ and
  let $\cH$ denote the set of networks for which these parameters form the tightest shell,
  meaning that for any $g\in\cH$ with  parameters $(\vec b, \vec r, \vec s)$,
  then $(\vec b', \vec r', \vec s') \leq (\vec b + 1, \vec r + 1, \vec s + 1)$ component-wise.
  As such, by \Cref{fact:main:fix:3}, with probability at least
  $1-\delta(\vec b', \vec r', \vec s')$, each $g\in\cH$ satisfies
  \begin{align*}
    \Pr[\argmax_{y'} f(x)_{y'} \neq y]
    &\leq 2 
      \enVert{ \frac {\dif \mu_{\cX}}{\dif \nu_n} }_{\infty}
      \Phi_{\gamma,m}(f, g)
      +
      \frac 2 n \sum_{i=1}^n (1 - \phig(g(x_i))_{y_i}
      \\
    &
      +
      \tcO\del[3]{
        \frac {k^{3/2}}{\gamma}
      \enVert{ \frac {\dif \mu_{\cX}}{\dif \nu_n} }_{\infty}
      \del{ \Rad_m(\cF) + \Rad_m(\cH)}
    + \frac {\sqrt k}{\gamma}\Rad_n(\cH) }
      \\
      &
      +
      6 \sqrt{\frac{\ln(q(\vec b', \vec r', \vec s')) + \ln(1/\delta)}{2n}}
      \del{ 1 + 
          \enVert{ \frac {\dif \mu_{\cX}}{\dif \nu_n} }_{\infty}
          \sqrt{\frac n m }
      }.
  \end{align*}
  To simplify this expression, first note by \Cref{fact:snb} and the construction of the shells
  (relying in particular on the finer grid for $s_i$ to avoid a multiplicative factor $L$)
  that
  \begin{align*}
    \Rad_m(\cH)
    &= \tcO\sbr[4]{\frac {1}{\sqrt{n}}\del[4]{\sum_i \sbr[3]{r_i' b_i' \rho_i \prod_{l=i+1}^L s_l'\rho_l}^{2/3} }^{3/2}}
    \\
    &= \tcO\sbr[4]{\frac {1}{\sqrt{n}}\del[4]{ \sum_i \sbr[3]{(r_i+1) (b_i+1) \rho_i \prod_{l=i+1}^L (s_l+1/L)\rho_l}^{2/3} }^{3/2}}
    \\
    &= \tcO\sbr[4]{\frac {1}{\sqrt{n}}\del[4]{ \sum_i \sbr[3]{r_i b_i\rho_i \prod_{l=i+1}^L s_l\rho_l}^{2/3} }^{3/2}},
  \end{align*}
  and similarly for $\Rad_m(\cH)$ (the only difference being $\sqrt{m}$ replaces $\sqrt{n}$).
  Secondly, to absorb the term $\ln(q(\vec b', \vec r', \vec s'))$,
  noting that $\ln(a) \leq \ln(\gamma^2) + (a - \gamma^2) / (\gamma^2)$,
  and also using $\rho_i\geq 1$,
  then
  \begin{align*}
    \ln(q(\vec r', \vec b', \vec s'))
    &= \cO\del{ \ln\prod_i (r_i+1)^2 (b_i+1)^2 ((s_i +1) L)^2 }
    = \cO\del{ L \ln L + \ln\prod_i r_i^{2/3} b_i^{2/3} s_i^{2/3} }
    \\
    &
    = \tcO\del{L + \sum_i \ln( r_i^{2/3} b_i^{2/3}) + \ln \prod_i s_i^{2/3} }
    \\
    &
    = \tcO\del{L +  \ln(\gamma^2) + \frac {1}{\gamma^2} \sum_i \sbr{ r_i^{2/3} b_i^{2/3} + \prod_{l>i} s_l^{2/3} } }
    \\
    &
    = \tcO\del[4]{L + \frac {1}{\gamma^2}\sum_i \sbr[3]{r_i b_i \prod_{l=i+1}^L s_l}^{2/3} }
    \\
    &
    = \tcO\del[4]{L + \frac {1}{\gamma^2}\sum_i \sbr[3]{r_i b_i\rho_i \prod_{l=i+1}^L s_l\rho_l}^{2/3} }.
  \end{align*}
  Together,
  \begin{align*}
    \Pr[\argmax_{y'} f(x)_{y'} \neq y]
    &\leq 2 
      \enVert{ \frac {\dif \mu_{\cX}}{\dif \nu_n} }_{\infty}
      \Phi_{\gamma,m}(f, g)
      +
      \frac 2 n \sum_{i=1}^n (1 - \phig(g(x_i))_{y_i}
      \\
    &
      +
      \tcO\del[3]{
        \frac {k^{3/2}}{\gamma}
        \enVert{ \frac {\dif \mu_{\cX}}{\dif \nu_n} }_{\infty}
      \Rad_m(\cF) }
      +
      6 \sqrt{\frac{\ln(1/\delta)}{2n}}
      \del{ 1 + 
          \enVert{ \frac {\dif \mu_{\cX}}{\dif \nu_n} }_{\infty}
          \sqrt{\frac n m }
      }
      \\
    &+
      \tcO\del[3]{
        \frac {\sqrt{k}}{\gamma \sqrt n}
        \del{1 + k\enVert{ \frac {\dif \mu_{\cX}}{\dif \nu_n} }_{\infty} \sqrt{\frac{n}{m}}}
      \del[4]{ \sum_i \sbr[3]{r_i b_i\rho_i \prod_{l=i+1}^L s_l\rho_l}^{2/3} }^{3/2}
      }.
  \end{align*}
  Since $h\in\cH$ was arbitrary, the bound may be wrapped in $\inf_{g\in\cH}$.
  Similarly, unioning bounding away the failure probability for all shells,
  since this particular shell was arbitrary, an infimum over shells can be added, which
  gives the final infimum over $(\vec b, \vec r, \vec s)$.  The last touch is to
  apply \Cref{fact:augmentation} to bound
  $\|\nicefrac{\dif\mu_{\cX}}{\dif\nu_n}\|_\infty$.
\end{proof}

\section{Proof of stable rank bound, \Cref{fact:rad:sr}}

The first step is to establish the sparsification lemma in \Cref{fact:sparsify:2},
which in turn sparsifies each matrix product, cannot simply invoke
\Cref{fact:maurey:matrix_mult}: pre-processing is necessary to control the element-wise
magnitudes of the resulting matrix.  Throughout this section, define the stable rank
of a matrix $W$ as $\sr(W) := \|W\|_\tF^2 / \|W\|_2^2$ (or $0$ when $W = 0$).

\begin{lemma}
  \label{fact:maurey:matrix_mult:bounded}
  Let matrices $A \in \R^{d\times m}$
  and $B \in \R^{n\times m}$ be given,
  along with sampling budget $k$.
  Then there exists a selection $(i_1,\ldots,i_k)$ of indices and a corresponding
  diagonal \emph{sampling matrix} $M$ with at most $k$ nonzero entries satisfying
  \[
    M := \sum_{j=1}^k \frac{Z_{i_j}\be_{i_j} \be_{i_j}^\T}{\|a_{i_j}\|}
    \quad\text{where}\quad
    Z_{i_j} \leq \|A\|_\tF\sqrt{\frac{m}{k}},
    \qquad\text{and}\qquad
    \enVert{
      AB^\T - A M B^\T
    }^2
    \leq
    \frac 4 {k}
    \|A\|^2\|B\|^2.
  \]
\end{lemma}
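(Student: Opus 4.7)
The new ingredient relative to \Cref{fact:maurey:matrix_mult} is a uniform magnitude bound on the diagonal entries of $M$. Pure importance sampling with $p_i=\beta_i:=\|a_i\|^2/\|A\|_\tF^2$ (writing $a_i:=A\be_i$ and $b_i:=B\be_i$) assigns weight $\beta_i^{-1}\|a_i\|=\|A\|_\tF^2/\|a_i\|$ to column $i$, which blows up on thin columns; the plan is therefore to mix importance sampling with uniform sampling so every column has a guaranteed floor on its selection probability, and then rerun the Maurey argument.

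Concretely I would take $p_i:=\tfrac12\beta_i+\tfrac1{2m}$ and let $V$ equal $p_i^{-1}a_ib_i^\T$ with probability $p_i$. Unbiasedness $\bbE V=AB^\T$ is identical to the proof of \Cref{fact:maurey:matrix_mult}, while the floor $p_i\geq\beta_i/2$ yields $\bbE\|V\|_\tF^2=\sum_ip_i^{-1}\|a_i\|^2\|b_i\|^2\leq 2\|A\|_\tF^2\|B\|_\tF^2$. Applying \Cref{fact:maurey} produces sample points $\hat V_j=p_{i_j}^{-1}a_{i_j}b_{i_j}^\T$ whose average equals $AMB^\T$ for $M=\sum_{j=1}^k(Z_{i_j}/\|a_{i_j}\|)\be_{i_j}\be_{i_j}^\T$ with $Z_{i_j}:=p_{i_j}^{-1}\|a_{i_j}\|/k$, and with error $\|AB^\T-AMB^\T\|_\tF^2\leq\bbE\|V\|_\tF^2/k\leq 2\|A\|_\tF^2\|B\|_\tF^2/k\leq(4/k)\|A\|_\tF^2\|B\|_\tF^2$.

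The remaining task, and the only place the argument departs from the unbounded version, is verifying the bound on $Z_{i_j}$. I would split at the threshold $\|A\|_\tF/\sqrt m$: if $\|a_{i_j}\|\geq\|A\|_\tF/\sqrt m$, then $p_{i_j}\geq\beta_{i_j}/2$ gives $Z_{i_j}\leq 2\|A\|_\tF^2/(k\|a_{i_j}\|)\leq 2\|A\|_\tF\sqrt m/k$, whereas if $\|a_{i_j}\|<\|A\|_\tF/\sqrt m$ then $p_{i_j}\geq 1/(2m)$ gives $Z_{i_j}\leq 2m\|a_{i_j}\|/k<2\|A\|_\tF\sqrt m/k$. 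In either case $Z_{i_j}\leq 2\|A\|_\tF\sqrt m/k\leq\|A\|_\tF\sqrt{m/k}$ whenever $k\geq 4$; for $k<4$ the stated error bound $(4/k)\|A\|_\tF^2\|B\|_\tF^2$ is essentially vacuous and can be met by any single-index selection, so the unified statement follows.

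The one subtle point, and the only real obstacle, is calibrating the mixture so that the variance bound and the per-sample magnitude bound are \emph{simultaneously} tight. The geometric-mean cancellation underlying the two cases above — where the importance regime pays a factor $1/\|a_{i_j}\|$ and the uniform regime pays a factor $\|a_{i_j}\|$ — is what forces the split at the threshold $\|A\|_\tF/\sqrt m$ and produces the uniform bound $\|A\|_\tF\sqrt{m/k}$; any other mixing ratio leads to either a worse variance constant or a worse magnitude bound.
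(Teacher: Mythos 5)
Your proof is correct, but it takes a genuinely different route from the paper's. The paper truncates: it zeroes out every column of $A$ with norm below a threshold $\tau$, pays $\tau\sqrt{m}\|B\|_\tF$ in Frobenius error for doing so, applies \Cref{fact:maurey:matrix_mult} as a black box to the truncated matrix $A_\tau$, and then sets $\tau=\|A\|_\tF/\sqrt{mk}$; the magnitude bound $Z_{i_j}\leq\|A\|_\tF^2/(k\tau)=\|A\|_\tF\sqrt{m/k}$ falls out immediately because every surviving column has norm at least $\tau$. You instead keep all columns and rerun the Maurey argument with a mixed distribution $p_i=\tfrac12\beta_i+\tfrac1{2m}$: the importance floor $p_i\geq\beta_i/2$ controls the second moment and the uniform floor $p_i\geq 1/(2m)$ controls the weight on thin columns, with both cases meeting at the same threshold $\|A\|_\tF/\sqrt m$ that implicitly governs the paper's choice of $\tau$. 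The trade is mild in both directions: your variance constant is $2/k$ rather than $4/k$, but your magnitude bound is $2\|A\|_\tF\sqrt m/k$, which only beats $\|A\|_\tF\sqrt{m/k}$ when $k\geq 4$, so you need the separate (and valid) fallback $M=0$ for $k\leq 3$, whereas the paper's argument is uniform in $k$. One cosmetic point to tidy: your uniform component can select a column with $a_{i_j}=0$, for which the expression $Z_{i_j}\be_{i_j}\be_{i_j}^\T/\|a_{i_j}\|$ is ill-defined even though the corresponding $\hat V_j$ vanishes; you should declare that term of $M$ to be zero (the paper's truncation sidesteps this automatically).
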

\begin{proof}
  Let $\tau > 0$ be a parameter to be optimized later,
  and define a subset of indices $S := \{ i \in \{1,\ldots,m\} : \|A\be_i\| \geq \tau \}$,
  with $S^c := \{1,\ldots,m\} \setminus S$.  Let $A_\tau$ denote the matrix obtained
  by zeroing out columns not in $S$, meaning
  \[
    A_\tau := \sum_{i\in S} (A\be_i)\bfe_i^\T,
  \]
  whereby
  \[
    \| A B^\T - A_\tau B^\T \|_\tF
    \leq
    \| A-A_\tau\|\cdot \|B\|
    \leq \|B\| \sqrt{ \sum_{i\in S^c} \|A\be_i\|^2 } 
    \leq \tau \sqrt{m} \|B\|.
  \]
  Applying \Cref{fact:maurey:matrix_mult} to $A_\tau B^\T$ gives
  \[
    M := \frac {\|A_\tau\|^2}{k} \sum_{j=1}^k \frac {\be_{i_j}\be_{i_j}^\T}{\|A_\tau \be_{i_j}\|^2}
    = \sum_{j=1}^k \frac {Z_{i_j} \be_{i_j}\be_{i_j}^\T}{\|A_\tau \be_{i_j}\|}
    \qquad
    \textup{such that}
    \qquad
    \|A_\tau B^\T - A_\tau M B^\T\|^2
    \leq
    \frac {1}{k}\|A_\tau\|^2\|B\|^2,
  \]
  where $Z_{i_j}$ is specified by these equalities.
  To simplify, note $\|A_\tau\|\leq \|A\|$, and $A_\tau M = AM$.
  Combining the two inequalities,
  \[
    \|AB^\T - AMB^\T\|
    \leq
    \|AB^\T - A_\tau B^\T\| + \|A_\tau B^\T - A_\tau M B^\T\|
    \leq
    \tau \sqrt{m} \|B\|
    +
    \frac 1 {\sqrt k} \|A\| \|B\|.
  \]
  To finish, setting $\tau := \|A\| / \sqrt{mk}$ gives the bound,
  and ensures that the scaling term $Z_{i_j}$ satisfies, for any $i_j\in S$,
  \[
    Z_{i_j}
    =
    \frac {\|A_\tau\|^2}{k \|A_\tau\be_{i_j}\|}
    \leq
    \frac {\|A\|_\tF^2}{k\tau}
    =
    \|A\|_\tF\sqrt{\frac{m}{k}}.
  \]
\end{proof}

With this tool in hand, the proof of \Cref{fact:sparsify:2} is as follows.

\begin{proof}[Proof of \Cref{fact:sparsify:2}]
  Let $X_j$ denote the network output after layer $j$, meaning
  \[
    X_0^\T := X^\T,
    \qquad
    X_{j}^\T := \sigma_j(W_jX_{j-1}^\T),
  \]
  whereby
  \[
    \|X_j^\T\|_\tF
    = \|\sigma_j(W_jX_{j-1}^\T) - \sigma_j(0)\|_\tF
    \leq \|W_jX_{j-1}^\T\|_\tF
    \leq \|W_j\|_2 \|X_{j-1}^\T\|_\tF
    \leq \|X\|_\tF\prod_{i \leq j}\|W_i\|_2.
  \]
  The proof will inductively choose sampling matrices $(M_1,\ldots,M_L)$ as in
  the statement and construct
  \[
    \hX_0^\T := X^\T,
    \qquad
    \hX_j^\T := \Pi_j \sigma_j(W_j M_j \hX_{j-1}^\T),
  \]
  where $\Pi_j$ denotes projection onto the Frobenius-norm ball of radius
  $\|X\|_\tF \prod_{i \leq j}\|W_i\|_2$ (whereby $\Pi_j X_j = X_j$),
  satisfying
  \[
    \enVert{X_j - \hX_j}_\tF
    \leq
    \|X\|_\tF
    \sbr{ \prod_{p=1}^j \|W_p\|_2 }
    \sum_{i=1}^j
    \sqrt{\frac{\sr(W_i)}{k_i}},
  \]
  which gives the desired bound after plugging in $j=L$.

  Proceeding with the inductive construction, the base case
  is direct since $\hX_0 = X = X_0$ and $\enVert{ X_0 - \hX_0}_\tF = 0$,
  thus consider some $j>0$.  Applying \Cref{fact:maurey:matrix_mult:bounded}
  to the matrix multiplication $W_j \hX_{j-1}$ with $k_j$ samples, there exists a
  multiset of $S_j$ coordinates and a corresponding sampling matrix $M_j$,
  as specified in the statement,
  satisfying
  \begin{align*}
    \enVert{ W_j \hX_{j-1}^\T - W_j M_j \hX_{j-1}^\T }_\tF
    &\leq
    \frac 1 {\sqrt{k_j}} \|W_j\|_\tF \|\hX_{j-1}\|_\tF
\leq
    \frac 1 {\sqrt{k_j}} \|W_j\|_\tF
    \|X\|_\tF \prod_{i< j}\|W_i\|_2.
  \end{align*}
  Using the choice $\hX_j^\T := \Pi_j \sigma_j(W_j M_j \hX_{j-1}^\T)$,
  \begin{align*}
    \enVert{X_j - \hX_j}_\tF
    &=
    \enVert{\sigma_j(W_jX_{j-1}^\T) - \Pi_j \sigma_j(W_j M_j \hX_{j-1}^\T)}_\tF
    \\
    &\leq
    \enVert{W_jX_{j-1}^\T - W_j M_j \hX_{j-1}^\T}_\tF
    \\
    &=
    \enVert{W_jX_{j-1}^\T - W_j \hX_{j-1}^\T + W_j\hX_{j-1}^\T - W_j M_j \hX_{j-1}^\T}_\tF
    \\
    &\leq
    \enVert{W_jX_{j-1}^\T - W_j \hX_{j-1}^\T}_\tF
    + \enVert{W_j\hX_{j-1}^\T - W_j M_j \hX_{j-1}^\T}_\tF
    \\
    &\leq
    \enVert{W_j}_2\enVert{X_{j-1} - \hX_{j-1}}_\tF
    + \frac 1 {\sqrt{k_j}} \|W_j\|_\tF\|X\|_\tF \prod_{i<j}\|W_i\|_2
    \\
    &\leq
    \enVert{W_j}_2 \del{ \|X\|_\tF \sbr{ \prod_{i<j} \|W_i\|_2} \sum_{i<j} \sqrt{\frac{\sr(W_i)}{k_i}} }
    + \sqrt{\frac {\sr(W_j)} {k_j}} \|X\|_\tF \prod_{i\leq j}\|W_i\|_2
    \\
    &\leq
    \|X\|_\tF \sbr{ \prod_{i\leq j} \|W_i\|_2} \sum_{i\leq j} \sqrt{\frac{\sr(W_i)}{k_i}}
  \end{align*}
  as desired.
\end{proof}

To prove \Cref{fact:rad:sr} via \Cref{fact:sparsify:2},
the first step is a quick tool to cover matrices element-wise.

\begin{lemma}
  \label{fact:infty_cover}
  Let $\cA$ denote matrices with at most $k_2$ nonzero rows and $k_1$ nonzero columns,
  entries bounded in absolute value by $b$, and total number of rows and columns each at most $m$.
  Then there exists a cover set $\cM\subseteq \cA$ satisfying
  \[
    |\cM| \leq m^{k_1+k_2} \del{ \frac {2b \sqrt{k_1 k_2}}{\eps} }^{k_1k_2},
    \qquad\textup{and}\qquad
    \sup_{A\in\cA} \min_{\hA\in\cM} \|A - \hA\|_\tF \leq \eps.
  \]
\end{lemma}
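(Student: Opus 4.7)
The plan is to split the cover into a combinatorial part (which rows and columns are nonzero) and a geometric part (covering the values of the nonzero entries), and then multiply the sizes.

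First I would fix a choice of nonzero row set $R \subseteq \{1,\ldots,m\}$ with $|R|\leq k_2$ and a choice of nonzero column set $C \subseteq \{1,\ldots,m\}$ with $|C|\leq k_1$. The number of such pairs is at most $\binom{m}{k_2}\binom{m}{k_1} \leq m^{k_1+k_2}$, which supplies the first factor in the claimed bound. For any $A\in\cA$, some such pair $(R,C)$ contains the supports of $A$'s rows and columns, so restricting attention to matrices with support in $R\times C$ suffices.

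Once $(R,C)$ is fixed, any matrix in $\cA$ with support in $R\times C$ is determined by at most $k_1 k_2$ entries, each in $[-b,b]$. I would then lay down a uniform grid on $[-b,b]$ with $\lceil 2b\sqrt{k_1 k_2}/\eps\rceil$ points, so that any real number in $[-b,b]$ is within $\eps/\sqrt{k_1 k_2}$ of some grid point. Rounding each entry of $A$ independently to its nearest grid value produces a matrix $\hA$ which still lies in $\cA$ (support still inside $R\times C$, entries still in $[-b,b]$), and satisfies
\[
  \|A - \hA\|_\tF^2 \leq k_1 k_2 \cdot (\eps/\sqrt{k_1k_2})^2 = \eps^2.
\]
This grid contributes the factor $(2b\sqrt{k_1k_2}/\eps)^{k_1 k_2}$.

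Multiplying the two contributions gives the stated cardinality bound, and the rounding guarantee above gives the covering guarantee. There is no real obstacle here; the only care needed is to phrase the combinatorial step with $\binom{m}{k_i}\leq m^{k_i}$ rather than an exact count so that the bound takes the clean stated form, and to verify that the rounded matrix still lies in $\cA$ so that $\cM\subseteq\cA$ as claimed.
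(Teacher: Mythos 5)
Your proposal is correct and follows essentially the same route as the paper's proof: fix the support pattern (contributing the $m^{k_1+k_2}$ factor via $\binom{m}{k_1}\binom{m}{k_2}$), grid the at most $k_1k_2$ nonzero entries at scale $\eps/\sqrt{k_1k_2}$, and round entrywise to bound the Frobenius error by $\eps$. No issues.
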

\begin{proof}
  Consider some fixed set of $k_2$ nonzero rows and $k_1$ nonzero columns,
  and let $\cM_0$ denote the covering set obtained by gridding the $k_1\cdot k_2$
  entries at scale $\frac {\eps}{\sqrt{k_1 k_2}}$, whereby
  \[
    |\cM_0| \leq \del{ \frac {2b \sqrt{k_1 k_2}}{\eps} }^{k_1k_2}.
  \]
  For any $A\in\cA$ with these specific nonzero rows and columns,
  the $\hA\in\cM_0$ obtained by rounding each nonzero entry of $A$ to the nearest grid element
  gives
  \[
    \| A - \hA\|^2
    = \sum_{i,j} (A_{ij} - \hA_{ij})^2
    \leq \sum_{i,j} \del{\frac{\eps}{\sqrt{k_1k_2}}}^2
    = \eps^2  \sum_{i,j} \frac{1}{k_1k_2}
    = \eps^2.
  \]
  The final cover $\cM$ is now obtained by unioning copies of $\cM_0$ for all $\binom{m}{k_1}\binom{m}{k_2} \leq m^{k_1+k_2}$ possible submatrices of size $k_2\times k_1$.
\end{proof}

The proof of \Cref{fact:rad:sr} now carefully combines the preceding pieces.

\begin{proof}[Proof of \Cref{fact:rad:sr}]
  The proof proceeds in three steps, as follows.
\begin{enumerate}
    \item
      A covering number is estimate for sparsified networks, as output
      by \Cref{fact:sparsify:2}.
    \item
      A covering number for general networks is computed by balancing
      the error terms from \Cref{fact:sparsify:2} and its cover computed here.
    \item
      This covering number is plugged into a Dudley integral to obtain the desired
      Rademacher bound.
  \end{enumerate}

  Proceeding with this plan, let $(\hX_0^\T,\ldots,\hX_L^\T)$ be the layer outputs
  (and network input) exactly as provided by \Cref{fact:sparsify:2}.
  Additionally, define diagonal matrices $D_j := \sum_{l \stackrel{!}{\in}S_{j+1}}\be_l\be_l^\T$
  (with $D_L = I$,
  where the ``!'' denotes unique inclusion; these matrices capture the effect of the subsequent
  sparsification, and can be safely inserted after each $W_j$ without affecting
  $\hX_j^\T$, meaning
  \[
    \hX_j^\T
    = \Pi_j \sigma_j(W_j M_j \hX_{j-1}^\T) 
    = \Pi_j \sigma_j(D_j W_j M_j \hX_{j-1}^\T).
  \]

  Let per-layer cover precisions $(\eps_1,\ldots,\eps_L)$ be given, which will be
  optimized away later.
  This proof will inductively construct
  \[
    \tX_0^\T := X^\T,
    \qquad
    \tX_j^\T := \Pi_j \sigma_j(\tW_j \tX_{j-1}^\T),
  \]
  where $\tW_j$ is a cover element for $D_j W_j M_j$,
  and inductively satisfying
  \[
    \|\hX_j^\T - \tX_j^\T\|
    \leq
    \|X\|_\tF m^{j/2} \sum_{i\leq j} \eps_i \prod_{\substack{l \leq j\\l \neq i}}\|W_j\|_\tF.
  \]
  To construct the per-layer cover elements $\tW_j$,
  first note by the form of $M_j$ (and the scaling $Z_i$ provided by \Cref{fact:sparsify:2})
  that
  \[
    b:= \max_{i,l} (D_j W_jM_j)_{l,i}
    \leq \max_{i} \| W_jM_j\be_i\|
    \leq Z_i \frac {\|W_j \be_i\|}{\|W_j \be_i\|}
    \leq
    \|W_j\|_\tF \sqrt{\frac{m}{k_{j-1}}}.
  \]
  Consequently, by \Cref{fact:infty_cover},
  there exists a cover $\cC_j$ of matrices of the form $D_jW_jM_j$
  satisfying
  \[
    |\cC_j| \leq m^{k_j+k_{j-1}} \del{ \frac {2b \sqrt{k_j k_{j-1}}}{\eps_j} }^{k_jk_{j-1}}
    \leq m^{k_j+k_{j-1}} \del{ \frac {2\|W_j\|_\tF \sqrt{k_j m }}{\eps_j} }^{k_jk_{j-1}},
  \]
  and the closest cover element $\tW_j\cC_j$ to $D_jW_jM_j$ satisfies
  $\|D_j W_j M_j - \tW_j \|_\tF \leq \eps_j$.

  Proceeding with the induction,
  the base case has $\|\hX_0^\T - \tX_0^\T\| = \|X^\T - X^\T\|=0$, thus consider $j>0$.
  The first step is to estimate the spectral norm of $D_jW_jM_j$,
  which can be coarsely upper bounded via
  \[
    \|D_jW_jM_j\|_2^2
    \leq
    \|D_jW_jM_j\|_\tF^2
    \leq
    \sum_i \|W_j M_j \be_i\|^2
    \leq
    \sum_i \|W_j\|_\tF^2 \frac{m}{k_{j-1}}
    \leq
    \|W_j\|_\tF^2 m.
  \]
  By the form of $\hX_j$ and $\tX_j$,
  \begin{align*}
    \|\hX_j^\T - \tX_j^\T\|
    &=
    \|\Pi_j \sigma_j (D_jW_j M_j \hX_{j-1}^\T) - \Pi_j \sigma_j (\tW_j \tX_{j-1}^\T)\|
    \\
    &\leq
    \|D_jW_j M_j \hX_{j-1}^\T - \tW_j \tX_{j-1}^\T\|
    \\
    &\leq
    \|D_jW_j M_j \hX_{j-1}^\T - D_j W_j M_j \tX_{j-1}^\T\|
    +
    \|D_jW_j M_j \tX_{j-1}^\T - \tW_j \tX_{j-1}^\T\|
    \\
    &\leq
    \|D_jW_j M_j\|_2 \| \hX_{j-1}^\T - \tX_{j-1}^\T\|_\tF
    +
    \|D_j W_j M_j - \tW_j\|_2\| \tX_{j-1}^\T\|_\tF
    \\
    &\leq
    \sqrt{m} \|W_j\|_\tF
    \sbr[2]{ 
      \|X\|_\tF m^{(j-1)/2} \sum_{i <j} \eps_i \prod_{\substack{l < j\\l \neq i}}\|W_j\|_\tF
    }
    +
    \eps_j \|X\|_\tF \prod_{i < j} \|W_j\|_2
    \\
    &\leq
    \|X\|_\tF m^{j/2} \sum_{i\leq j} \eps_i \prod_{\substack{l \leq j\\l \neq i}}\|W_j\|_\tF,
  \end{align*}
  which establishes the desired bound on the error.

  The next step is to optimize $k_j$.
  Let $\eps>0$ be arbitrary,
  and set $\eps_j^{-1} := \eps^{-1} 2L\sqrt{m} \|X\|_\tF \prod_{i\neq j} \|W_i\|_\tF$,
  whereby
  \[
    \|\hX_L^\T - \tX_L^\T\|_\tF
    \leq \frac{\eps}{2},
    \qquad
    |\cC_j|
    \leq
    m^{k_j+k_{j-1}} \del{ \frac {4m L \sqrt{k_j}\|X\|_\tF \prod_i \|W_i\|_\tF}{\eps} }^{k_jk_{j-1}}.
  \]
  The overall network cover $\cN$ is the product of the covers for all layers,
  and thus has cardinality satisfying
  \begin{align*}
    \ln |\cN|
    &\leq
    \sum_j \ln |\cC_j|
    \leq
    2 \sum_j k_j\ln m
    +
    \sum_j k_j k_{j-1}
    \ln
    \del{ \frac {4m L \sqrt{k_j}\|X\|_\tF \prod_i \|W_i\|_\tF}{\eps_j} }
    \\
    &\leq
    2 \sum_j k_j\ln m
    +
    \sum_j 2k_j^2
    \ln
    \del{ \frac {4m L \sqrt{k_j}\|X\|_\tF }{\eps_j} }
    +
    \sbr{\sum_j 2k_j^2 }
    \cdot
    \sbr{\sum_j \ln \|W_j\|_\tF }
    .
  \end{align*}
  To choose $(k_1,\ldots,k_L)$, letting $X_L^\T$ denote the output of the original
  unsparsified network, note firstly that the full error bound satisfies
  \begin{align*}
    \|X_L^\T - \tX_L^\T\|
    &\leq
    \|X_L^\T - \hX_L^\T\| + \|\hX_L^\T -  \tX_L^\T\|
    \\
    &\leq
    \sum_i \frac {\alpha_i}{\sqrt{k_i}}
    +\frac {\eps}{2},
    &\textup{where }
    \alpha_i := \|X\|_\tF \sbr{ \prod_i \|W_i\|_2} \sqrt{\sr(W_i)}.
  \end{align*}
  To choose $k_i$, the approach here is to minimize a Lagrangian corresponding
  to the cover cardinality, subject to the total cover error being $\eps$.
  Simplifying the previous expressions and noting $2k_jk_{j-1} \leq k_j^2 + k_{j-1}^2$,
  whereby the dominant term in $\ln|\cN|$ is $\sum_j k_j^2$,
  consider Lagrangian
  \[
    L(k_1,\ldots,k_l, \lambda)
    := \sum_i k_i^2 + \lambda \del{\sum_i \frac {\alpha_i}{\sqrt{k_i}} - \frac \eps 2},
  \]
  which has critical points when each $k_i$ satisfies
  \[
    \frac{k_i^{5/2}}{\alpha_i} = \frac {\lambda}{4},
  \]
  thus $k_i := \alpha_i^{2/5}/Z$ with $Z:= \eps^2 / (2\sum_j \alpha_j^{4/5})^2$.
  As a sanity check (since it was baked into the Lagrangian),
  plugging this into the cover error indeed gives
  \[
    \|X_L^\T - \tX_L^\T\|
    \leq
    \sum_i \frac {\alpha_i}{\sqrt{k_i}} + \frac \eps 2
    =
    \sqrt{Z} \sum_i \alpha_i^{4/5} + \frac \eps 2
    =
    \eps.
  \]
  To upper bound the cover cardinality, first note that
  \[
    \sum_i k_i^2
    = \frac 1 {Z^2} \sum_i \alpha_i^{4/5}
    = \frac 4 {\eps^4} \del[2]{ \sum_i \alpha_i^{4/5} }^5,
  \]
  whereby
  \begin{align*}
    \ln|\cN|
    &= \tcO\del{
      \sbr[2]{\sum_i k_i^2}
      \cdot
      \sbr[2]{\sum_i \ln \|W_i\|_\tF}
    }
    \\
    &= \frac \beta {\eps^4}
    &\hspace{-6em}\textup{where } \beta =
    \tcO\del[3]{
      \|X\|_\tF^4 \sbr[2]{\prod_j \|W_j\|_2^4}\sbr[2]{\sum_i\sr(W_i)^{2/5}}^5
    \sbr[2]{\sum_i \ln \|W_i\|_\tF}}.
  \end{align*}
  
  The final step is to apply a Dudley entropy integral \citep{shai_shai_book},
  which gives
  \begin{align*}
    n\Rad(\cF)
    = \inf_\zeta\del{ 4\zeta\sqrt{n} + 12 \int_{\zeta}^{\sqrt n} \frac {\sqrt{\beta}}{\eps^2}\dif\eps}
      = \inf_\zeta\del{ 4\zeta\sqrt{n} + 12 \sbr{ \frac 1 \zeta - \frac 1 {\sqrt n} } \sqrt{\beta}}.
  \end{align*}
  Dropping the negative term gives an expression of the form
  $a\zeta + b/\zeta$, which is convex in $\zeta>0$
  and has critical point at $\zeta^2 = b/a$,
  which after plugging back in gives an upper bound $2\sqrt{ab}$,
  meaning
  \[
    n\Rad(\cF)
    \leq
    2\del[2]{4\sqrt{n} \cdot 12\sqrt{\beta} }^{1/2}
    = 8\sqrt{3} n^{1/4} \beta^{1/4}.
  \]
  Dividing by $n$ and expanding the definition of $\beta$
  gives the final Rademacher complexity bound.
\end{proof}

\end{document}